\DeclareMathOperator{\sign}{sign}
\DeclareMathOperator{\Lip}{Lip}
\DeclareMathOperator{\Div}{div}
\DeclareMathOperator{\proj}{proj}
\DeclareMathOperator{\id}{id}
\DeclareMathOperator{\TV}{TV}
\DeclareMathOperator{\TGV}{TGV}
\DeclareMathOperator{\KR}{KR}
\DeclareMathOperator{\BD}{BD}
\DeclareMathOperator{\support}{supp}
\DeclareMathOperator{\diam}{diam}
\DeclareMathOperator{\dom}{dom}
\newcommand{\calE}{\mathcal{E}}
\newcommand{\grad}{\nabla}
\newcommand{\RR}{\mathbb{R}}
\newcommand{\ind}[1]{I_{#1}}
\newcommand{\norm}[2][]{\|#2\|_{#1}}
\newcommand{\abs}[1]{|#1|}
\newcommand{\scp}[2]{\langle #1, #2\rangle}
\newcommand{\set}[2]{\{#1\ :\ #2\}}
\newcommand{\sett}[1]{\{#1\}}
\newcommand{\radon}{\mathfrak{M}}
\newcommand{\push}{\#}
\newcommand{\mathvar}[1]{\textup{#1}}
\newcommand{\BVspace}{\mathvar{BV}}
\newcommand{\defeq}{:=}
\newcommand{\field}[1]{\mathbb{#1}}
\newcommand{\R}{\field{R}}
\renewcommand{\d}{\,\mathrm{d}} %\text{d}}
\newcommand{\bdry}{\partial}
\newcommand{\downto}{\searrow}
\newcommand{\Union}{\bigcup}
\newcommand{\union}{\cup}
\newcommand{\isect}{\cap}
\newcommand{\closure}[1]{\overline #1}
\newcommand{\Lebesgue}{\mathfrak{L}}
\newcommand{\Hausdorff}{\mathfrak{H}}
\newcommand{\restrict}{\llcorner}
\def \weaktostar@sym{\setbox0=\hbox{$\rightharpoonup$}\rlap{\hbox 
        to\wd0{\hss\raise1ex\hbox{$\scriptscriptstyle{*\,}$}\hss}}\box0}
    \def \weaktostar    {\mathrel{\weaktostar@sym}}
\def\Xint#1{\mathchoice
{\XXint\displaystyle\textstyle{#1}}%
{\XXint\textstyle\scriptstyle{#1}}%
{\XXint\scriptstyle\scriptscriptstyle{#1}}%
{\XXint\scriptscriptstyle\scriptscriptstyle{#1}}%
\!\int}
\def\XXint#1#2#3{{\setbox0=\hbox{$#1{#2#3}{\int}$ }
\vcenter{\hbox{$#2#3$ }}\kern-.6\wd0}}
\def\dashint{\Xint-}
\newcommand*{\centerfloat}{%
  \parindent \z@
  \leftskip \z@ \@plus 1fil \@minus \textwidth
  \rightskip\leftskip
  \parfillskip \z@skip}
\newtheorem{lemma}{Lemma}[section]
\newtheorem{theorem}[lemma]{Theorem}
\newtheorem{corollary}[lemma]{Corollary}
\theoremstyle{definition}
\newtheorem{remark}[lemma]{Remark}
\title{Imaging with Kantorovich-Rubinstein discrepancy}
\author{Jan Lellmann\thanks{Department for Applied Mathematics and Theoretical Physics, University of Cambridge, United Kingdom, \texttt{j.lellmann@damtp.cam.ac.uk}}
    \and
    Dirk A. Lorenz\thanks{Institute for Analysis and Algebra, TU Braunschweig, 38092 Braunschweig, Germany, \texttt{d.lorenz@tu-braunschweig.de}}
    \and
    Carola Schönlieb\thanks{Department for Applied Mathematics and Theoretical Physics, University of Cambridge, United Kingdom}
    \and 
    Tuomo Valkonen\thanks{
        Prometeo Fellow, Center for Mathematical Modeling (Modemat), EPN Quito, Ecuador
        }
}
\begin{document}

\maketitle
\begin{abstract}
   We propose the use of the Kantorovich-Rubinstein norm from optimal
  transport in imaging problems. In particular, we discuss a variational regularisation model endowed with a Kantorovich-Rubinstein
  discrepancy term and total variation regularization in the context of image denoising and
  cartoon-texture decomposition. We point out
  connections of this approach to several other recently proposed methods such as total
  generalized variation and norms capturing oscillating patterns. We also
  show that the respective optimization problem can be turned into a
  convex-concave saddle point problem with simple constraints and
  hence, can be solved by standard tools. Numerical examples exhibit
  interesting features and favourable performance for denoising and
  cartoon-texture decomposition.
\end{abstract}

\section{Introduction}
\label{sec:intro}

Distance functions related to ideas from optimal transport have
appeared in various places in imaging problems in the last ten years. The main applications in this context are image and shape classification \cite{memoli2007use,memoli2008gromov,memoli2009spectral,memoli2011spectral,memoli2011gromov,oudre2012classification,rabin2010geodesic,swoboda2013convex}, segmentation \cite{chan2007histogram,ni2009local,peyre2012wasserstein,schmitzer2013modelling,schmitzer2013object}, registration and warping \cite{haker2004optimal,zhu2007imagemorphing,papadakis2014optimal}, image smoothing \cite{burger2012densityestimation}, contrast and colour modification \cite{rabin2011wasserstein,ferradans2013regularized}, texture synthesis and texture mixing \cite{rabin2012wasserstein}, and surface mapping \cite{lipman2011conformal,1281.65034,boyer2011algorithms,bunn2011comparing}. Being a distance function applicable to very general densities (continuous and discrete (Dirac deltas) densities)
the Wasserstein distance had an increasing impact on robust distance measures in imaging \cite{rubner2000earth,buttazzo2005optimalplanning,grauman2004fast,ling2007efficient,wang2011optimal,rabin2012wasserstein,peyre2012wasserstein,burger2012densityestimation}.
In most cases, the 2-Wasserstein
distance \cite{ambrosio2006gradient} is used.

In this work we propose the use of the so-called
Kantorovich-Rubinstein norm ($\KR$-norm) in imaging. We investigate the $\KR$-$\TV$ denoising problem, that is, for a given noisy image $u^0$ on a set $\Omega$ and two constants $\lambda_1,\lambda_2\geq 0$ we consider
\[
\min_u \norm[\KR,(\lambda_1,\lambda_2)]{u - u^0} + \TV(u)
\]
where the $\KR$-norm is defined for a Radon measure $\mu$ (and hence,
also for $L^1$-functions) on a set $\Omega\subset\RR^n$ by
\[
\norm[\KR,(\lambda_1,\lambda_2)]{\mu} = \sup\set{\int_\Omega f\d\mu}{\abs{f}\leq\lambda_1,\ \Lip(f)\leq\lambda_2}.
\]
The Kantorovich-Rubinstein norm~\cite[§8.3]{bogachev2007measure} is closely related to the
1-Wasserstein distance and hence, to optimal transport problems.  It
will turn out that this norm has interesting relations to other well
known concepts in imaging: The KR-norm is a generalization of the
$L^1$ norm, and hence, a $\KR$-$\TV$ denoising model inherits and
generalizes some of the favorable properties of the $L^1$-TV
denoising~\cite{chan2005aspects}.  The generalization of $L^1$-norm
discrepancies to $\KR$-norm discrepancies shares some similarities
with the generalization from the TV penalty to the total generalized
variation (TGV) penalty~\cite{bredies2010tgv}.  Finally, the KR-norm
discrepancy shares properties with Meyer's $G$-norm
model~\cite{meyer2002oscillating,vese2003modelingtextures} for
oscillating patterns and for cartoon-texture decomposition.  Also from
the computational point of view, the $\KR$-norm has favorable
properties. It turns out that the $\KR$-$\TV$ denoising problem has a
formulation as a saddle-point problem that can be solved by means of
several primal-dual methods. The computational cost per iteration as
well as the needed storage requirements are almost as low as for
similar algorithms for $L^1$-$\TV$ denoising.

The paper is organized as follows: After fixing the notation we
introduce and recall transport metrics in
Section~\ref{sec:transport-metrics}.  In Section~\ref{sec:primalKR} we
derive two reformulations of the $\KR$-norm that will be used to
analyze and interpret the $\KR$-$\TV$ denoising problem, which is the
content of Section~\ref{sec:KR-TV-denoising}. In
Section~\ref{sec:numerical-solution} we illustrate how the $\KR$-$\TV$
denoising problem can be solved numerically by primal dual methods.
Finally, in Section~\ref{sec:experiments} we present examples for
$\KR$-$\TV$ denoising and cartoon-texture decomposition and then finish the paper with a conclusion.

\subsection{Notation}
\label{sec:notation}

We work in a domain $\Omega\subset\RR^n$ and use $\abs{x}$ as the
euclidean absolute value for $x\in\Omega$. We denote by
$\radon(\Omega,\RR^n)$ the space of $\RR^n$-valued Radon measures,
i.e. the dual space of
$(C_0(\Omega,\RR^n),\norm[\infty]{\abs{\cdot}})$ of continuous
functions that vanish ``at infinity''. If we want to emphasize that a
function or a measure is vector valued we write $\vec{\nu}$ but
sometime we omit the emphasis. The dual pairing between
$\radon(\Omega,\RR^n)$ and $C_0(\Omega,\RR^n)$ (and any two other
spaces in duality) will be denoted by $\scp{\vec{f}}{\vec{\mu}}$.
Consequently, the norm on $\radon(\Omega,\RR^n)$ is
$\norm[\radon]{\vec{\mu}} = \sup_{\abs{\vec{f}}\leq 1}\int
\vec{f}\cdot\d \vec{\mu}$ and is called the Radon norm. We identify
$u\in L^1(\Omega,\RR^n)$ with the corresponding measure
$u\in\radon(\Omega,\RR^n)$, i.e. we treat $L^1(\Omega,\RR^n)$ embedded
into $\radon(\Omega,\RR^n)$.  The $n$-dimensional Lebesgue measure is
denoted by $\Lebesgue^n$ while the $d$-dimensional Hausdorff measure
is $\Hausdorff^d$.

For a measure $\mu$ on $\Omega$, another set $\Omega'$ and
$F:\Omega\to\Omega'$ the push-forward of $\mu$ by $F$ is $\mu\push
F(A) = \mu(F^{-1}(A))$. On $\Omega\times \Omega$ we denote by
$\proj_{1/2}$ the projections onto the first and second component,
respectively. Having a measure $\gamma$ on $\Omega\times\Omega$ we
denote (with slight abuse of notation) by $\proj_{1/2}\gamma$ the push
forward of $\gamma$ by $\proj_{1/2}$, i.e. the marginals of
$\gamma$. The restriction of some measure $\mu$ onto some set $A$ is
denoted by $\mu\restrict A$.  By $C_b(\Omega,\RR^n)$ we denote the
space of bounded and continuous functions on $\Omega$. For
$f:\Omega\to\RR$ we denote by $\Lip(f) = \sup_{x\neq y}
\abs{f(x)-f(y)}/\abs{x-y}$ the Lipschitz constant of $f$.

For two points $a,b\in\RR^n$ we define the line interval $[a, b] = \{
t a + (1-t) b \mid t \in [0, 1]\}$ and the vector measure $\llbracket
a, b \rrbracket$ to be
\[
\llbracket a,b\rrbracket = \frac{b-a}{\abs{b-a}}\Hausdorff^1\restrict [a,b].
\]

By $\diam(\Omega) = \sup\set{\abs{x-y}}{x,y\in\Omega}$ we denote the
diameter on $\Omega$. For a set $C$ we denote by $\ind{C}$ the
indicator function, i.e. $\ind{C}(u) = 0$ for $u\in C$ and $=\infty$
otherwise.

\section{Transport metrics}
\label{sec:transport-metrics}

A variety of different metrics exist on measure spaces.
As the study of metrics on measure spaces has its origins in probability
theory, most metrics are defined on the space of probability measures,
i.e., non-negative measures with total mass equal to one. A popular
class of such metrics is given by the Wasserstein metrics: For $p\geq
1$ and two probability measures $\mu$ and $\nu$ define
\begin{equation}
W_p(\mu,\nu) = \Big( \inf\set{\int_{\Omega\times\Omega} \abs{x-y}^p\d
  \gamma(x,y)}{\proj_1\gamma = \mu,\ \proj_2\gamma = \nu}\Big)^{1/p}.\label{eq:p-wasserstein}
\end{equation}
Note that this metric also makes sense if $\mu$ and $\nu$ are not
probability measures but still non-negative and have equal
mass,
i.e., $\int_\Omega\d\mu = \int_\Omega \d\nu$.  However, if the mass is not equal,
no $\gamma$ with $\mu$ and $\nu$ as marginals
would exist.

The celebrated Kantorovich duality~\cite{kantorovich1942translocation,villani2009optimaltransport} states that, in the case of non-negative measures with equal mass, the
Wasserstein metric can be equivalently expressed as 
\[
W_p(\mu,\nu) = \Big(\sup\set{\int_\Omega \phi\d\mu +
  \int_\Omega\psi\d\nu}{\phi,\psi\in C_b(\Omega),\ \phi(x) + \psi(y)
  \leq |x-y|^p}\Big)^{1/p}.%\footnote{Usually the duality is stated for probability measures. However, for two non-negative measures with equal mass, the duality obviously still holds true. If two non-negative measure do not have the same mass, \eqref{eq:p-wasserstein} gives $+\infty$ since the infimum is taken over the empty set, while the dual formulation gives $+\infty$ since we can make the value of the sum of the integrals arbitrarily large by adding and subtracting a constant from the functions $\phi$ and $\psi$.}
\]
A particular special case is $p=1$, and here, the Kantorovich-Rubinstein
duality~\cite{kantorovich1957rubinstein,villani2009optimaltransport}
states that
\[
W_1(\mu,\nu) = \sup\set{\int_\Omega f\d(\mu-\nu)}{\Lip(f)\leq 1}.
\]
A particularly interesting fact is that this metric only
depends on the difference $\mu-\nu$. In fact, by setting
\[
\norm[\Lip^*]{\mu} = \sup\set{\int_\Omega f\d\mu}{\Lip(f)\leq 1}
\]
one obtains the so-called dual Lipschitz norm on the space of
measures with zero mean and finite first moments (cf.~\cite[§8.10(viii)]{bogachev2007measure} where it is called modified Kantorovich-Rubinstein norm). 
Note that the supremum is
unbounded if one has a nonzero mean. To prevent the
norm from blowing up in this case, 
and hence, to
obtain a norm on the space of all signed measures with finite first
moments, one can add the constraint that the test functions $f$ shall
be bounded. This leads to the expression
\[
\sup\set{\int_\Omega f\d\mu}{\abs{f}\leq 1,\ \Lip(f)\leq 1}.
\]
(which is called Kantorovich-Rubinstein norm in \cite[§8.3]{bogachev2007measure}).
Since we would like the bound on the values of $f$ and the bound on its Lipschitz
constant to vary independently in the following, we introduce for
$\lambda = (\lambda_1,\lambda_2)$ the norm
\begin{equation}
  \norm[\KR,\lambda]{\mu} =\sup\set{\int_\Omega
    f\d\mu}{\abs{f}\leq\lambda_1,\ \Lip(f)\leq \lambda_2}\label{eq:KR-def}
\end{equation}
Note that in the extreme cases $\lambda_1=\infty$ and $\lambda_2=\infty$  we recover the dual Lipschitz and the Radon norm
\begin{equation}
  \begin{split}
    \norm[\KR,(\infty,1)]{\mu} &= \norm[\Lip^*]{\mu}\\
    \norm[\KR,(1,\infty)]{\mu} &= \norm[\radon]{\mu}.
  \end{split}\label{eq:kr-specia-case}
\end{equation}
Note that the norm $\norm[\KR,(\lambda_1,\lambda_2)]{\mu}$ with $\lambda_1,\lambda_2>0$ is equivalent to the bounded Lipschitz norm~\cite[§6]{villani2009optimaltransport} where one takes the supremum over all functions $f$ such that $\abs{f} + \Lip(f)\leq 1$.
In general we have the following simple estimates:
\begin{lemma}[Estimates by the Radon norm]
  \label{lem:KRnorm-estimates}
  For any $\lambda = (\lambda_1,\lambda_2)\geq 0$ it holds that
  \[
  \norm[\KR,\lambda]{\mu}\leq \lambda_1\norm[\radon]{\mu}.
  \]
  If $\mu$ is non-negative it holds that
  \[
  \norm[\KR,\lambda]{\mu}= \lambda_1\norm[\radon]{\mu}.  
  \]
  If $\Omega$ has finite diameter $\diam(\Omega)$, then it
  holds for any $\mu$ with $\int_\Omega \d \mu = 0$ that
  \[
  \norm[\KR,\lambda]{\mu}\leq
  \lambda_2\tfrac{\diam(\Omega)}{2}\norm[\radon]{\mu}.
  \]
\end{lemma}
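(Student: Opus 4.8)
The plan is to treat the three estimates separately, in each case comparing the supremum defining $\norm[\KR,\lambda]{\cdot}$ in \eqref{eq:KR-def} with the Radon norm through the elementary duality inequality $\int_\Omega f\d\mu\leq\norm[\infty]{f}\,\norm[\radon]{\mu}$, valid for any bounded $f$.

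For the first bound I would note that every $f$ admissible in \eqref{eq:KR-def} satisfies $\norm[\infty]{f}\leq\lambda_1$, so that $\int_\Omega f\d\mu\leq\lambda_1\norm[\radon]{\mu}$; taking the supremum over admissible $f$ yields $\norm[\KR,\lambda]{\mu}\leq\lambda_1\norm[\radon]{\mu}$. Here the Lipschitz constraint $\Lip(f)\leq\lambda_2$ plays no role and is simply discarded.

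For the equality in the non-negative case the inequality $\leq$ is already in hand, so only the reverse direction is needed. Here I would exhibit a single admissible test function attaining the bound: since $\mu\geq 0$ one has $\norm[\radon]{\mu} = \mu(\Omega)$, and the constant function $f\equiv\lambda_1$ is admissible in \eqref{eq:KR-def} because $\abs{f}\leq\lambda_1$ and $\Lip(f) = 0\leq\lambda_2$. It gives $\int_\Omega f\d\mu = \lambda_1\mu(\Omega) = \lambda_1\norm[\radon]{\mu}$, closing the gap.

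The third estimate is the only one requiring an idea, and I expect it to be the (mild) main obstacle. The point is to exploit the zero-mean hypothesis $\int_\Omega\d\mu = 0$, which makes $\int_\Omega f\d\mu$ invariant under replacing $f$ by $f - c$ for any constant $c$. Given an admissible $f$, the bound $\Lip(f)\leq\lambda_2$ together with the finite diameter controls the oscillation, $\sup_\Omega f - \inf_\Omega f\leq\lambda_2\diam(\Omega)$, so centring with $c = \tfrac12(\sup_\Omega f + \inf_\Omega f)$ gives $\norm[\infty]{f - c}\leq\tfrac12\lambda_2\diam(\Omega)$. Then
\[
\int_\Omega f\d\mu = \int_\Omega (f-c)\d\mu\leq\norm[\infty]{f-c}\,\norm[\radon]{\mu}\leq\tfrac{\lambda_2\diam(\Omega)}{2}\norm[\radon]{\mu},
\]
and taking the supremum over admissible $f$ finishes the argument. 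The only detail to verify is that $\sup_\Omega f$ and $\inf_\Omega f$ are finite, which holds because a function of finite Lipschitz constant on a set of finite diameter is bounded.
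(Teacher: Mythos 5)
Your proposal is correct and follows essentially the same route as the paper: the first two claims are handled identically (drop the Lipschitz constraint, respectively test with $f\equiv\lambda_1$), and for the third you use exactly the paper's idea that zero mean permits shifting $f$ by a constant, after which the Lipschitz bound and finite diameter give $\norm[\infty]{f-c}\leq\lambda_2\diam(\Omega)/2$. The only cosmetic difference is that you centre each admissible $f$ directly rather than first discarding the constraint $\abs{f}\leq\lambda_1$ as the paper does, which is if anything slightly tidier.
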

\begin{proof}
  The first inequality follows directly from the definition of $\norm[\KR,\lambda]{\mu}$
  by dropping the constraint $\abs{\grad
    f}\leq\lambda_2$ and the second claim by observing that the
  supremum is attained at $f\equiv\lambda_1$.
  
  For the last claim we estimate from above by dropping the constraint
  $\norm[\infty]{f}\leq\lambda_1$. However, since $\Omega$ has bounded
  diameter and $\mu$ has mean value zero, the constraint
  $\norm[\infty]{\abs{\grad f}}\leq \lambda_2$ implies that one also
  has a bound $\norm[\infty]{f}\leq\lambda_2\diam(\Omega)/2$ (indeed,
  $\lambda_2\diam(\Omega)$ is a bound on the value $\max f - \min f$,
  however, since $\int_\Omega \d \mu = 0$, we may add a constant to $f$
  without altering the outer supremum). We obtain
  \[
  \norm[\KR,\lambda_1,\lambda_2]{\mu} \leq
  \sup\limits_{\norm[\infty]{f}\leq\lambda_2\diam(\Omega)/2}\int f\,\d\mu
  \leq \lambda_2\diam(\Omega)\norm[\radon]{\mu}/2.
  \]
\end{proof}

\begin{remark}
  Note that the $\KR$-norm may not be bounded from below by the Radon
  norm in general: For $\mu = \delta_{x_0} + \delta_{x_1}$ it holds that
  $\norm[\radon]{\mu} = 2$ while $\norm[\KR,\lambda]{\mu}\to 0$
  for $\abs{x_0 -x_1}\to 0$.
\end{remark}

\section{Primal formulations of the $\KR$-norm}
\label{sec:primalKR}

We present two reformulations of the $\KR$-norm. The first, only shown
formally, is similar to the Kantorovich-Rubinstein duality and shows
the relation to optimal transport.

The idea is to replace the constraint $\Lip(f)\leq \lambda_2$ by a
pointwise constraint of the form
$\abs{f(x)-f(y)}\leq\lambda_2\abs{x-y}$, i.e., we have
\[
\norm[\KR,\lambda]{\mu} = \sup\set{\int f\d\mu}{\abs{f(x)}\leq\lambda_1,\
  \abs{f(x)-f(y)}\leq\lambda_2\abs{x-y}}.
\]
We express the pointwise constraints by $f(x)-\lambda_1\leq 0$,
$-f(x)-\lambda_1\leq 0$, $f(x) - f(y) - \lambda_2\abs{x-y}\leq 0$ and
$f(y)-f(x)-\lambda_2\abs{x-y}\leq 0$, introduce Lagrange
multipliers and clean up the resulting expression and finally arrive at
\begin{equation}
  \norm[\KR,\lambda]{\mu} = \inf_{\gamma \geq 0} \Bigg[\lambda_1\int_\Omega\d\abs{\mu -
    \proj_1\gamma + \proj_2\gamma} +
  \lambda_2\int_{\Omega\times\Omega}\abs{x-y}\d \gamma\Bigg].\label{eq:kr-transportx}
\end{equation}
% This should be compared to the Wasserstein distance
% from~\eqref{eq:p-wasserstein}: \added[id=jan]{(This might be another reason why we might need a  $\gamma \geq 0$ constraint in \eqref{eq:p-wasserstein})} There $\mu$ and $\nu$ are non-negative
% measures with the same mean, and one has an equality constraint on
% the marginals of $\gamma$.

% \added[id=jan]{I think instead of \eqref{eq:p-wasserstein} this is closer to the Kantorovich-Rubinstein distance (Kantorovich-Rubinstein transshipment problem in Rachev/Rüschendorf (1.1.3) (4.1.3)),}
% \begin{equation}
% d(\nu,\nu') = \inf_{\gamma \geq 0} \int_{\Omega \times \Omega} |x-y| d \gamma\quad{s.t.}\quad\proj_1 \gamma - \proj_2 \gamma = \nu - \nu'.
% \end{equation}
% \added[id=jan]{If we set $\nu=\mu^+$ and $\nu'=\mu^-$ for a signed measure $\mu$ we get}
This expression may be compared to the following variant from \cite{rachev1998masstransportation}
\begin{equation*}
  \norm[\KR']{\mu} = \inf_{\gamma \geq 0} \set{ \int_{\Omega \times \Omega} |x-y| \d \gamma}{\proj_1 \gamma - \proj_2 \gamma = \mu}, 
\end{equation*}
which is a ``strict constraint'' version of
\eqref{eq:kr-transportx}. Because we have a metric cost function
$(x,y)\mapsto |x-y|$, this is the same as requiring $\proj_1 \gamma =
\mu^+, \proj_2 \gamma = \mu^-$ and we recover the Wasserstein metric
with $p=1$ from~\eqref{eq:p-wasserstein}.

We get another reformulation by dualizing the problem slightly
differently. The idea is to reformulate the constraint $\Lip(f)\leq
\lambda_2$ with the help of the distributional derivative of $f$ as
$\norm[\infty]{\abs{\grad f}}\leq\lambda_2$. This is allowed since for
bounded, convex and open domains $\Omega$, it is indeed the case that
$\norm[\infty]{\abs{\grad f}} = \Lip(f)$
(cf.~\cite[Prop. 2.13]{ambrosio2000functions}).
Through this reformulation, the KR-norm can be seen to be equivalent
to the flat norm in the theory of currents \cite{morgan2000gmt,federer1969gmt}.
\begin{lemma}
  \label{lem:cascading}
  Let $\Omega\subset\RR^n$ be open, convex, and bounded, and let $\lambda =
  (\lambda_1,\lambda_2)\geq 0$. Then it holds that
  \begin{equation}
  \label{eq:equiv-closure}
  \norm[\KR,\lambda]{\mu} =
  \min_{\vec{\nu}\in\radon(\closure{\Omega},\RR^n)}
  \lambda_1\norm[\radon]{\mu - \Div\vec{\nu}} +
  \lambda_2\norm[\radon]{\abs{\vec{\nu}}}
  \end{equation}
  where $\Div \vec{\nu}$ is understood to be taken in $\closure\Omega$ or,
  equivalently, in any open set $U$ containing $\closure\Omega$.
\end{lemma}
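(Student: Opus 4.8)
The plan is to read \eqref{eq:equiv-closure} as an instance of Fenchel--Rockafellar duality, in which the divergence appears as the adjoint of the gradient and the two Radon norms on the right are the convex conjugates of the two sup-norm constraints in \eqref{eq:KR-def}. As a warm-up that also pins down the meaning of the divergence pairing, I would first record the easy inequality ``$\le$''. Using the cited identity $\norm[\infty]{\abs{\grad f}}=\Lip(f)$ on bounded convex $\Omega$, an admissible $C^1$ competitor $f$ in \eqref{eq:KR-def} satisfies $\abs{f}\le\lambda_1$ and $\norm[\infty]{\abs{\grad f}}\le\lambda_2$; for any $\vec\nu\in\radon(\closure\Omega,\RR^n)$ integration by parts gives $\scp{f}{\mu}=\scp{f}{\mu-\Div\vec\nu}-\scp{\grad f}{\vec\nu}$, where $\Div\vec\nu$ is the distributional divergence on an open $U\supset\closure\Omega$, so that any mass of $\vec\nu$ on $\bdry\Omega$ is retained and no boundary term survives once $f$ is extended to $C^1_c(U)$ (and $\grad f$ continuous makes the pairing with $\vec\nu$ unambiguous). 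Bounding the two pairings by $\lambda_1\norm[\radon]{\mu-\Div\vec\nu}$ and $\lambda_2\norm[\radon]{\abs{\vec\nu}}$, then passing to the supremum over $f$ and the infimum over $\vec\nu$, yields ``$\le$''.

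For the reverse inequality together with attainment of the minimum I would run the duality on $X=C^1(\closure\Omega)$ with the bounded operator $Kf=(f,\grad f)$ into $Y=C(\closure\Omega)\times C(\closure\Omega,\RR^n)$, so that $Y^{*}=\radon(\closure\Omega)\times\radon(\closure\Omega,\RR^n)$. With $G\equiv 0$, and with $H(\phi,v)=\ind{B_1}(\phi)+\ind{B_2}(v)-\scp{\phi}{\mu}$ for the sup-norm balls $B_1=\set{\phi}{\norm[\infty]{\phi}\le\lambda_1}$ and $B_2=\set{v}{\abs{v}\le\lambda_2}$, the primal value $\inf_f[G(f)+H(Kf)]$ equals $-\norm[\KR,\lambda]{\mu}$. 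This identification requires knowing that restricting the supremum in \eqref{eq:KR-def} to $C^1$ test functions leaves it unchanged, which on convex $\Omega$ follows by extending a Lipschitz competitor and mollifying: this preserves the bound $\lambda_1$ and the Lipschitz constant $\lambda_2$ up to a vanishing error.

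The conjugate computations are then routine support-function evaluations. The adjoint is $K^{*}(\rho,\vec\nu)=\rho-\Div\vec\nu$, since the adjoint of the gradient is $-\Div$; $G^{*}$ is the indicator of $\sett{0}$; and $H^{*}(\rho,\vec\nu)=\lambda_1\norm[\radon]{\rho+\mu}+\lambda_2\norm[\radon]{\abs{\vec\nu}}$, because the conjugate of the indicator of the sup-norm ball of radius $\lambda_i$ is $\lambda_i$ times the dual Radon norm. The requirement $G^{*}(K^{*}(\rho,\vec\nu))<\infty$ forces $\rho=\Div\vec\nu$, so the Fenchel--Rockafellar dual collapses to $\sup_{\vec\nu}\big[-\lambda_1\norm[\radon]{\mu-\Div\vec\nu}-\lambda_2\norm[\radon]{\abs{\vec\nu}}\big]$, whose negative is precisely the right-hand side of \eqref{eq:equiv-closure}.

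The crux, and the step I expect to be the main obstacle, is upgrading this from weak to strong duality with the dual supremum attained, so that ``$\min$'' is justified. I would invoke Fenchel--Rockafellar in the form that guarantees no duality gap and attainment of the dual maximum whenever there is a single $f_0$ with $G(f_0)<\infty$ and $H$ continuous at $Kf_0$. For $\lambda_1,\lambda_2>0$ the constant function $f_0\equiv c$ with $\abs{c}<\lambda_1$ works, because $Kf_0=(c,0)$ lies in the interior of $B_1\times B_2$ and $H$ agrees near $Kf_0$ with the continuous affine map $(\phi,v)\mapsto-\scp{\phi}{\mu}$. The degenerate cases lie outside this qualification and I would settle them directly: for $\lambda_2=0$ both sides equal $\lambda_1\abs{\int_\Omega\d\mu}$ (on the right because $\Div\vec\nu$ has zero total mass, so $\norm[\radon]{\mu-\Div\vec\nu}$ is minimised at value $\abs{\int_\Omega\d\mu}$, attained by transporting $\mu$ to a single point), and for $\lambda_1=0$ both sides vanish, with minimiser $\vec\nu=0$.
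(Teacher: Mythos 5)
Your proposal is correct, and while it runs on the same engine as the paper's proof --- Fenchel--Rockafellar duality, with the Radon norms arising as conjugates of the sup-norm constraints --- the implementation is genuinely different in ways worth noting. The paper works on an open neighbourhood $U \supset \closure\Omega$ with $X = C_c^1(U)$, $Y = C_0(U,\RR^n)$, dualizes only the gradient constraint ($K = \grad$) while keeping the data term and the $\lambda_1$-bound in $F$; it then verifies the Attouch--Brezis constraint qualification, invokes Kirszbraun to identify the resulting supremum with $\norm[\KR,\lambda]{\mu}$, and needs two further steps that you avoid: a weak*-compactness argument to upgrade the dual infimum to a minimum, and support bookkeeping (the sets $C$, $D$) to pass from $U$ back to $\closure\Omega$. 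Your product-space splitting $Kf = (f,\grad f)$, $G \equiv 0$, on $C^1(\closure\Omega)$ buys both of these for free: the dual constraint $\rho = \Div\vec\nu$ (which is precisely the ``divergence taken in $\closure\Omega$'' of the statement) falls out of $G^* = \ind{\sett{0}}$, and the interior-point qualification in the Ekeland--Temam form yields attainment of the dual maximum, i.e.\ the ``min'', directly from the duality theorem. The price is the density step --- that $C^1$ competitors realize the same supremum as Lipschitz ones --- which you settle by extension, truncation, and mollification; note that the paper needs essentially the same fact and is arguably less careful there, since the Kirszbraun extension is only Lipschitz, not in $C_c^1(U)$, so your explicit mollification is the cleaner treatment. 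Finally, your separate handling of the degenerate cases $\lambda_1 = 0$ and $\lambda_2 = 0$ (where the interior-point condition fails) is a genuine improvement in completeness: the paper's qualification check also implicitly requires $\lambda_2 > 0$ even though the lemma is stated for $\lambda \ge 0$, and your transport-to-a-point argument correctly closes the $\lambda_2 = 0$ case.
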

\begin{proof}
  We have
  \[
  \norm[\KR,\lambda]{\mu} = \sup_f\int_\Omega f\d\mu -
  \ind{\sett{\norm[\infty]{\cdot}\leq\lambda_1}}(f) -
  \ind{\sett{\norm[\infty]{\abs{\cdot}}\leq\lambda_2}}(\grad f).
  \]
  Now let $U$ be an open set containing $\closure\Omega$, define the
  Banach spaces $X = C_c^1(U)$ and $Y = C_0(U,\RR^n)$, and the subsets
  \begin{align*}
    A & = \set{f\in X}{\sup_{x\in\closure{\Omega}}\abs{f(x)}\leq\lambda_1}\\
    B & = \set{\vec{g}\in Y}{\sup_{x\in\closure{\Omega}}\abs{\vec{g}(x)}\leq\lambda_2}.
  \end{align*}
  Further define functionals $F:X\to\RR\union\sett{\infty}$ and
  $G:Y\to\RR\union\sett{\infty}$ by
  \[
  F(f) = -\int_\Omega f\d\nu + \ind{A}(f),\qquad G(\vec{g}) = \ind{B}(\vec{g})
  \]
  as well as the linear operator $K = \grad:X\to Y$. With this
  notation we have
  \[
  \norm[\KR,\lambda]{\mu} = \sup_{f\in X} F(f) + G(Kf).
  \]
  To use the Fenchel-Rockafellar duality~\cite{ekeland1999convex} we use the constraint
  qualification from~\cite{attouch1986dualitysumconvex}, i.e., that it holds that
  \[
  \Union_{\alpha>0}\alpha[\dom(G) - K\dom(F)] \supset
  \Union_{\alpha>0} \alpha A = Y.
  \]
  Hence, we have
  \[
  \sup_{f\in X} -F(f) - G(Kf) = \inf_{\nu\in Y^*} F^*(-K^*\nu) + G^*(\nu).
  \]
  We have $X^* = \radon(U)$ and $Y^* = \radon(U,\RR^n)$ and the
  conjugate functions of $F$ and $G$ are expressed with the help of the
  sets
  \begin{align*}
    C & = \set{\eta\in\radon(U)}{\abs{\eta}(U\setminus\closure\Omega) = 0}\\
    D & = \set{\vec{\nu}\in\radon(U,\RR^n)}{\abs{\vec{\nu}}(U\setminus\closure\Omega) = 0}
  \end{align*}
  as
  \[
  F^*(\eta) = \lambda_1\norm[\radon(\closure\Omega)]{\mu+\eta} +
  \ind{C}(\eta),\qquad G^*(\vec{\nu}) =
  \lambda_2\norm[\radon(\closure\Omega)]{\abs{\vec{\nu}}} + \ind{D}(\vec{\nu})
  \]
  Since by the Kirszbraun theorem every $f$ that is Lipschitz
  continuous on $\Omega$ can be extended to $U$ (with preservation of
  the Lipschitz constant) it follows with $K^* = -\Div:Y^*\to X^*$ that
  \begin{align*}
    \norm[\KR,\lambda]{\mu} & = \inf_{\vec{\nu}\in Y^*}F^*(-K^*\vec{\nu}) + G^*(\vec{\nu})\\
    & = \inf_{\nu\in\radon(U,\RR^n)} \lambda_1 \norm[\radon(U)]{\mu-\Div\vec{\nu}}
    + \lambda_2\norm[\radon(U)]{\abs{\vec{\nu}}} +
    \ind{C}(\Div\vec{\nu})+ \ind{D}(\vec{\nu}).
  \end{align*}
  Since bounded sets in $\radon(U,\RR^n)$ are relatively weakly*
  compact, we can replace the infimum by a minimum and since $\support
  \vec{\nu}\subset\closure{\Omega}$ implies that
  $\support\Div\vec{\nu}\subset\closure{\Omega}$ we can replace
  $\radon(U,\RR^n)$ by $\radon(\closure\Omega,\RR^n)$ and drop the
  constraints $C$ and $D$ and arrive at
  \[
  \norm[\KR,\lambda]{\mu} = \min_{\nu\in\radon(\closure{\Omega},\RR^n)}
  \lambda_1 \norm[\radon(\closure\Omega)]{\mu-\Div\vec{\nu}} +
  \lambda_2\norm[\radon(\closure\Omega)]{\abs{\vec{\nu}}}
  \]
  as desired.
\end{proof}

In Theorem \ref{thm:equiv-convex-linfty} below we will prove that 
actually we can take $\vec\nu$ as an $L^1$ vector field with $L^1$ 
divergence in \eqref{eq:equiv-closure}. Namely 
$\vec\nu \in W^{1,1}(\Omega; \Div)$, where for $\Omega \subset \R^n$ an
open domain, we define
\[
    W^{1,1}(\Omega; \Div) :=
    \{\vec\nu \in L^1(\Omega; \R^n) \mid \Div \vec\nu \in L^1(\Omega) \}.
\]
As such, our result is closely
related to the work in \cite{pratelli2002regularity}, where this
$L^1$ property is proved
for
the transport density $\abs{\vec\nu}$. Our proof is however different and 
shorter, based on the following simpler geometric estimate.

\begin{lemma}
    \label{lemma:approx-parallel}
    Let $\Omega \subset \R^n$ be convex, open and bounded, and $\mu=\sum_{i=1}^N \alpha_i \delta_{x_i}$.
    Then any optimal solution $\nu$ to \eqref{eq:equiv-closure}
    has the form $\nu=\sum_{j=1}^M \beta_j \llbracket a_j, b_j \rrbracket$,
    where $a_j, b_j = x_i$ for some $i$.
    Moreover, %if $n=2$, 
    the transport rays
    $[a_j, b_j]$ are \emph{approximately parallel} in the following
    sense: there exist constants $c=c(n)$ and $\kappa=\kappa(n)$
    such that if $[a_j, b_j] \isect B(x, \rho) \ne \emptyset$ and $[a_k, b_k] \isect B(x, \rho) \ne \emptyset$ 
    with $a_j,b_j,a_k,b_k \not \in B(x, c\rho)$, then $[a_j, b_j]$ and $[a_k, b_k]$
    satisfy $a_j, b_j, a_k, b_k \in B(x,2\kappa\rho)+\R z$
    for some unit vector $z$.
\end{lemma}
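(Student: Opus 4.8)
The statement splits into a structural part and a quantitative geometric part, and I would attack them separately.

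\emph{Structure of minimisers.} I would pass through the transport formulation \eqref{eq:kr-transportx}. The map $\gamma\mapsto\vec\nu_\gamma\defeq\int_{\Omega\times\Omega}\llbracket x,y\rrbracket\,\d\gamma(x,y)$ sends a feasible $\gamma\geq 0$ to a feasible field for \eqref{eq:equiv-closure}: since $\Div\llbracket x,y\rrbracket=\delta_x-\delta_y$ we get $\Div\vec\nu_\gamma=\proj_1\gamma-\proj_2\gamma$, so $\mu-\Div\vec\nu_\gamma$ is exactly the residual in \eqref{eq:kr-transportx}, while $\norm[\radon]{\llbracket x,y\rrbracket}=\abs{x-y}$ gives $\norm[\radon]{\abs{\vec\nu_\gamma}}\leq\int\abs{x-y}\,\d\gamma$; the two objectives therefore agree up to this inequality and an optimal $\vec\nu$ for \eqref{eq:equiv-closure} arises from an optimal $\gamma$. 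For $\mu=\sum_i\alpha_i\delta_{x_i}$ a standard rearrangement shows mass is moved only between atoms: routing a sliver into a point $y\notin\{x_i\}$ and paying the fresh residual atom it creates there is never cheaper than leaving that sliver as residual at its source. The remaining problem is a finite linear program over flows on the complete graph on $\{x_i\}$ together with residual slack, whose minimisers are attained at extreme points and hence are finite sums $\gamma=\sum_j\beta_j\delta_{(a_j,b_j)}$ with $a_j,b_j\in\{x_i\}$; the corresponding $\vec\nu=\sum_j\beta_j\llbracket a_j,b_j\rrbracket$ is the asserted form, the segments lying in $\closure\Omega$ by convexity.

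\emph{Reduction of parallelism to non-crossing.} The optimal $\gamma$ is cyclically monotone for the cost $(x,y)\mapsto\abs{x-y}$, so any two of its pairs obey the swap inequality $\abs{a_j-b_j}+\abs{a_k-b_k}\leq\abs{a_j-b_k}+\abs{a_k-b_j}$. After translating so that $x=0$, let $p_j,p_k$ be the feet of the perpendiculars from $0$ onto the two lines, with unit directions $u_j,u_k$, so that $\scp{p_j}{u_j}=\scp{p_k}{u_k}=0$ and $\abs{p_j},\abs{p_k}\leq\rho$, and write $a_j=p_j-s_ju_j$, $b_j=p_j+t_ju_j$ (similarly for $k$). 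Orthogonality gives $\abs{a_j}^2=\abs{p_j}^2+s_j^2$ and the like, so the hypothesis $a_j,b_j,a_k,b_k\notin B(0,c\rho)$ forces each half-length to exceed $L\defeq\rho\sqrt{c^2-1}$.

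\emph{The geometric estimate.} I would substitute these parametrisations into the swap inequality and expand both sides. The decisive feature is that the terms \emph{linear} in the offset $p_j-p_k$ cancel between the two cross-distances, so that, for an absolute constant $C$, the leading-order residue of the inequality reads
\[
\Big(\tfrac{s_j t_k}{s_j+t_k}+\tfrac{t_j s_k}{t_j+s_k}\Big)\big(1-\scp{u_j}{u_k}\big)\;\leq\;C\rho^2/L+C\rho\,\abs{u_j-u_k}.
\]
Each harmonic-mean factor on the left is comparable to the corresponding ray length, so this bounds $\abs{u_j-u_k}$ and makes the two directions nearly parallel. Taking $z=u_j$ and using $\abs{p_j},\abs{p_k}\leq\rho$ then places all four endpoints inside a tube $B(0,2\kappa\rho)+\R z$, and choosing $c=c(n)$ large enough fixes a dimensional $\kappa=\kappa(n)$.

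\emph{Main obstacle.} The delicate point is \emph{uniformity}. To land all four endpoints in a tube of radius $2\kappa\rho$ with $\kappa$ depending only on $n$, the deviation $\abs{u_j-u_k}$ times the (possibly large) ray length must stay $O(\rho)$; that is, the angle bound has to scale like $\rho$ divided by the genuine extent of the rays and not merely by the lower bound $L=\rho\sqrt{c^2-1}$. Securing this is exactly the role of the first-order cancellation above, which replaces a linear-in-offset defect by a quadratic one. I expect the hardest book-keeping to be the case in which the two rays carry very unequal portions on the two sides of $B(x,\rho)$: there one must verify that the harmonic-mean factors still dominate the lengths that enter the tube width, which is the crux of turning the pointwise non-crossing estimate into a scale-invariant parallelism statement.
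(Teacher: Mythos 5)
Your structural step (reduction to a finite LP over flows between the atoms, with extreme points giving $\nu=\sum_j\beta_j\llbracket a_j,b_j\rrbracket$) is sound and is exactly what the paper dismisses as a ``simple combinatorial problem'', and your reduction of parallelism to the swap inequality $\abs{a_j-b_j}+\abs{a_k-b_k}\leq\abs{a_j-b_k}+\abs{a_k-b_j}$ is the same optimality mechanism the paper exploits (there in geometric form: if the projected segments cross far from their endpoints, exchanging partners strictly shortens the transport). The genuine gap is the step you yourself flag as the ``main obstacle'', and it is fatal rather than book-keeping. In your key estimate the factor $\tfrac{s_jt_k}{s_j+t_k}+\tfrac{t_js_k}{t_j+s_k}$ is comparable to $\min(s_j,t_k)+\min(s_k,t_j)$, \emph{not} to the lengths of the rays. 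If both rays extend a distance $R$ far beyond $c\rho$ on one side of $B(x,\rho)$ but only about $c\rho$ on the other (i.e.\ $s_j,s_k\sim R$ while $t_j,t_k\sim c\rho$), your inequality only forces $\abs{u_j-u_k}\leq C/c$, a constant, whereas containment of all four endpoints in a tube $B(x,2\kappa\rho)+\R z$ would force $\abs{u_j-u_k}\leq C\kappa\rho/R$, which is arbitrarily smaller; choosing $z$ along the longer ray does not help, because both rays are long on the same side. So the proposal does not establish the tube property.

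Worse, no refinement can close this gap, because the statement fails in exactly this regime. Take $n=2$, $\lambda_2=1$, $\lambda_1=R$, let $u_\pm=(\cos\theta,\pm\sin\theta)$ with $\sin\theta$ slightly below $1/(c+2)$, and set
\[
\mu=\delta_{Ru_+}+\delta_{Ru_-}-2\delta_0,
\qquad
\nu=\llbracket 0,Ru_+\rrbracket+\llbracket 0,Ru_-\rrbracket,
\qquad
\Omega=B(0,2R).
\]
Then $\Div\nu=\mu$, so $\nu$ has cost $2R$ in \eqref{eq:equiv-closure}, and $\nu$ is optimal because the feasible function $f(y)=\min\{\abs{y}-R,\,R\}$ (it satisfies $\abs{f}\leq\lambda_1$, $\Lip(f)\leq\lambda_2$) gives $\int f\d\mu=2R$ as well. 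Now let $x=((c+2)\rho,0)$: both rays meet $B(x,\rho)$, all endpoints $0,Ru_\pm$ lie outside $B(x,c\rho)$, yet the angle between the rays is about $2/(c+2)$ \emph{independently of $R$}. If $0$, $Ru_+$, $Ru_-$ all lay within $2\kappa\rho$ of a line $x+\R z$, then the vectors $Ru_+$ and $Ru_+-Ru_-$ would both lie within angle $O\bigl((c+2)\kappa\rho/R\bigr)$ of the line $\R z$, forcing $\abs{\cos\angle(Ru_+,Ru_+-Ru_-)}\to 1$ as $R\to\infty$; but that cosine equals $\sin\theta\leq 1/2$. (Note also that with a common sink your cyclic-monotonicity step is vacuous: the swap leaves the cost unchanged.) Since the task was comparison: the paper's own proof breaks on the same configuration, which falls into its first case $\abs{a_j-a_k}<\kappa\rho$, whose asserted conclusion $\abs{\bar b_j-\bar b_k}<2\kappa\rho$ --- justified only by ``the segments cannot diverge much on the opposite side of the ball'' --- is precisely what this example violates: passing through $B(x,\rho)$ bounds the angle between the rays by roughly $2/c$, but the divergence on the far side is that angle times the (unbounded) far length. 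Any correct version of the lemma must change the statement, e.g.\ by requiring the endpoints' distances from $B(x,\rho)$ to be comparable, or by reformulating it as a bound on the mass carried by the offending rays, which is what the application in Theorem \ref{thm:equiv-convex-linfty} actually needs.
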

\begin{proof}
    The claim that $\nu$ has the form $\nu=\sum_{j=1}^M \beta_j \llbracket a_j, b_j \rrbracket$
    is trivial, as the problem in \eqref{eq:equiv-closure} with discrete $\mu$ is a 
    simple combinatorial problem.
    
    Suppose $[a_j, b_j] \isect B(x, \rho) \ne \emptyset$ and $[a_k, b_k] \isect B(x, \rho) \ne \emptyset$,
    and that
    $a_j,b_j,a_k,b_k \not \in B(x, c\rho)$, 
    for $c$ yet to be determined.
    If $n=2$, let $\bar a_j \defeq a_j$, $\bar b_j \defeq b_j$, $\bar a_k \defeq a_k$, 
    and $\bar b_k \defeq b_k$. Also set $d \defeq 0$, and $v \defeq 0$.
    Otherwise, if $n > 2$, let $v \in \R^n$ be the vector giving the minimum distance 
    between the lines
    \[
        L_j \defeq a_j+\R(b_j-a_j),
        \quad
        \text{and}
        \quad
        L_k \defeq a_k+\R(b_k-a_k).
    \]
    %Since  $b_j+a_j$ and $b_k+a_k$ span a two-dimensional
    %subspace, 
    We may then find a plane $P \subset \R^n$
    orthogonal to $v$ such that $L_j \subset P$ and $L_k \subset v + P$.
    After rotation and translation, if necessary, we may without loss
    of generality assume that $v=(0, d) \in \R^n$ for some
    $d \in \R^{n-2}$, and
    \[
        a_j=(\bar a_j, 0),\ b_j=(\bar b_j, 0),
        \quad
        \text{and}
        \quad
        a_k=(\bar a_k, d),\ b_k=(\bar b_k, d).
    \]
    We also denote $x=(\bar x, x_0)$.
    Since  $L_j$ and $L_k$ lie on the planes $P$ and $v+P$ at a constant
    distance $\norm{d} \le 2\rho$ apart, we find that
    $\bar a_j, \bar b_j, \bar a_k, \bar b_k \not \in B(\bar x, \gamma_n c\rho)$,
    for some dimensional constant $\gamma_n \in (0, 1)$. 
    %By multiplying 
    %original $c$ by $1/\gamma$, we may without loss of generality 
    %assume $\gamma=1$, and take 
    %$\bar a_j, \bar b_j, \bar a_k, \bar b_k \not \in B(\bar x, c\rho)$.
    In fact, we may assume by shifting all of the points closer towards $x$
    that
    \[
        \bar a_j, \bar b_j, \bar a_k, \bar b_k \in \bdry B(\bar x, \gamma_n c\rho),
    \]
    This is possible with $c>1$ as the segments $[\bar a_j, \bar b_j]$ 
    and $[\bar a_k, \bar b_k]$ pass through $B(\bar x, \rho)$, and so
    we may split each segment into three parts -- two outside
    $B(\bar x, \gamma_n c\rho)$, and one inside.
    
    Let $\kappa>2$.
    Observe now that in case $n=2$ and generally for $n>2$, when looking from the
    direction $v$, we have one of the two-dimensional situation depicted in 
    Figure \ref{fig:approx-parallel}\subref{fig:approx-parallel-one} or
    \subref{fig:approx-parallel-two}. The segments
    $[\bar a_j, \bar b_j]$ and $[\bar a_k, \bar b_k]$, starting and ending
    on $\bdry B(\bar x, \gamma_n c\rho)$, both pass through approximately ($c \gg 1$) in 
           the middle of this sphere, through $\bdry B(x, \rho)$.
    They %either cross within $\bdry B(\bar x, c\rho)$, 
    are either within a cylinder of width $2\kappa\rho$,
    as in Figure \ref{fig:approx-parallel}\subref{fig:approx-parallel-two},
    or are not, as in Figure \ref{fig:approx-parallel}\subref{fig:approx-parallel-one}.

    If $\norm{a_j-a_k}<\kappa\rho$
    and $c$ is large enough that $B(x, \rho)$ reduces to almost to
    a point in comparison to $B(x, \gamma_n c\rho)$,
    then $\norm{\bar b_j-\bar b_k}<2\kappa\rho$. This is because
    both segments $[\bar a_j, \bar b_j]$ and $[\bar a_k, \bar b_k]$ also
    pass through the ball $B(x, \rho)$ 
    and so cannot diverge much on the opposite side of the ball.
    Trivially a unit vector $z$ exists, 
    such that both segments lie in the cylinder $B(x, 2\kappa\rho)+\R z$.
    Otherwise, for large enough $c$, both $\abs{\bar a_j-\bar a_k} \ge \kappa\rho$ 
    as well as $\abs{\bar b_j-\bar b_k}>\kappa\rho$.
    Since $d \le 2\rho < \kappa\rho$, i.e., some midpoints of the segments
    are closer than the end points, we observe that the two segments
    have to cross.
    That is $[\bar a_j, \bar b_j] \isect [\bar a_k, \bar b_k]=\bar q$
    for some $\bar q$. If $c$ and $\kappa$ are large enough that
    $B(x, \rho)$ reduces to a point in comparison to everything else,
    we can make $\bar q \in B(\bar x, \rho)$. 
    By simple geometrical reasoning, on the triangle
    $\bar a_j - \bar q - \bar b_k$,
    compare Figure \ref{fig:approx-parallel}\subref{fig:approx-parallel-improve},
    it now follows that
    \[
        \abs{\bar a_j-\bar b_k}
        \le
        \sqrt{\abs{\bar a_j-\bar q}^2-(\kappa-2)^2\rho^2}
        +
        \sqrt{\abs{\bar b_k-\bar q}^2-(\kappa-2)^2\rho^2}.
    \]
    Likewise
    \[
        \abs{\bar a_k-\bar b_j}
        \le
        \sqrt{\abs{\bar a_k-\bar q}^2-(\kappa-2)^2\rho^2}
        +
        \sqrt{\abs{\bar b_j-\bar q}^2-(\kappa-2)^2\rho^2}.
    \]
    If $n=2$, or more generally $d=0$, it trivially follows that
    \[
        \begin{split}
        \abs{a_j-b_k}+\abs{a_k-b_j}
        &
        <
        \abs{a_j-q}+\abs{b_k-q}
        +
        \abs{a_k-q}+\abs{b_j-q}
        \\
        &
        =
        \abs{a_j-b_j} + \abs{a_k-b_k}.
        \end{split}
    \]
    Otherwise, minding that $\abs{d} \le 2\rho$ and $\kappa > 2$, 
    we calculate
    \[
        \begin{split}
        \abs{a_j-b_k}+\abs{a_j-b_k}
        &
        =
        \sqrt{\abs{\bar a_j-\bar b_k}^2+\abs{d}^2}
        +
        \sqrt{\abs{\bar a_k-\bar b_j}^2+\abs{d}^2}
        \\
        &
        \le
        \sqrt{(\abs{\bar a_j-\bar q}+\abs{\bar b_k-\bar q})^2-2(\kappa-2)^2\rho^2+d^2}
        \\
        &\phantom{\le}
        +
        \sqrt{(\abs{\bar a_k-\bar q}+\abs{\bar b_j-\bar q})^2-2(\kappa-2)^2\rho^2+d^2}
        \\
        &
        <
        \abs{a_j-q}+\abs{b_k-q}
        +
        \abs{a_k-q}+\abs{b_j-q}
        \\
        &
        =
        \abs{a_j-b_j} + \abs{a_k-b_k}.
        \end{split}
    \]
    This provides a contradicion to the optimality of the transport rays
    $[a_j, b_j]$ and $[a_k, b_k]$, and shows the claim.
\end{proof}

\begin{remark}
    If $n=2$, we can take $\kappa=2$, and the argument is simplified
    considerably.
\end{remark}

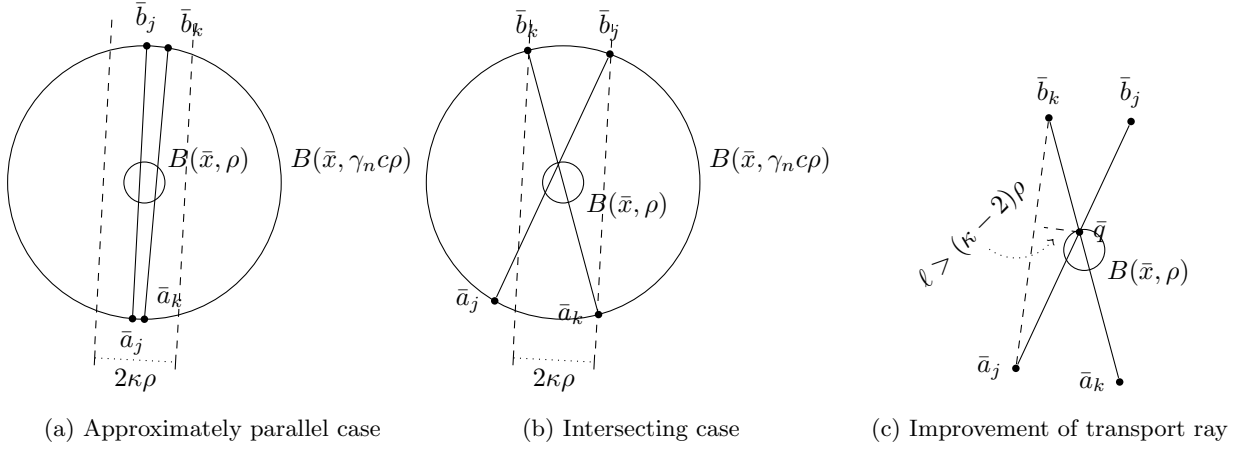
\begin{figure}
    \tikzexternaldisable
    \centerfloat%\centering
    \newlength{\w}
    \setlength{\w}{0.15\columnwidth}
    \begin{subfigure}[t]{0.45\columnwidth}
        \centering
        \begin{tikzpicture}
            \draw (0,0) circle (0.15\w) node[above right, xshift=.1\w] {$B(\bar x, \rho)$};
            \draw (0,0) circle (\w) node[above right, xshift=\w] {$B(\bar x, \gamma_n c\rho)$};
            \draw (265:\w) node[shape=circle,inner sep=1pt,fill=black,label=below:$\bar a_j$] {} -- (89:\w) node[shape=circle,inner sep=1pt,fill=black,label=above:$\bar b_j$] {};
            \draw (270:\w) node[shape=circle,inner sep=1pt,fill=black,label=above right:$\bar a_k$] {} -- (80:\w) node[shape=circle,inner sep=1pt,fill=black,label=above right:$\bar b_k$] {};
            \draw[dashed,rotate=-3,xshift=-0.3\w] (0,-1.2\w) -- (0, 1.2\w);
            \draw[dashed,rotate=-3,xshift=0.3\w] (0,-1.2\w) -- (0, 1.2\w);
            \draw[|-|,dotted, rotate=-3] (-0.3\w,-1.3\w) -- node[below] {$2\kappa\rho$}  (0.3\w, -1.3\w);
        \end{tikzpicture}
        \caption{Approximately parallel case}
        \label{fig:approx-parallel-one}
    \end{subfigure}
    \begin{subfigure}[t]{0.45\columnwidth}
        \centering
        \begin{tikzpicture}
            \draw (0,0) circle (0.15\w) node[below right, xshift=.1\w] {$B(\bar x, \rho)$};
            \draw (0,0) circle (\w) node[above right, xshift=\w] {$B(\bar x, \gamma_n c\rho)$};
            \node[shape=circle,inner sep=1pt,fill=black,label=left:$\bar a_j$] (aj) at (240:\w) {};
            \node[shape=circle,inner sep=1pt,fill=black,label=above:$\bar b_j$] (bj) at (70:\w) {};
            \node[shape=circle,inner sep=1pt,fill=black,label=left:$\bar a_k$] (ak) at (285:\w) {};
            \node[shape=circle,inner sep=1pt,fill=black,label=above:$\bar b_k$] (bk) at (105:\w) {};
            \draw (aj) -- (bj);
            \draw (ak) -- (bk); 
            \draw[dashed,rotate=-3,xshift=-0.3\w] (0,-1.2\w) -- (0, 1.2\w);
            \draw[dashed,rotate=-3,xshift=0.3\w] (0,-1.2\w) -- (0, 1.2\w);
            \draw[|-|,dotted, rotate=-3] (-0.3\w,-1.3\w) -- node[below] {$2\kappa\rho$}  (0.3\w, -1.3\w);
        \end{tikzpicture}
        \caption{Intersecting case}
        \label{fig:approx-parallel-two}
    \end{subfigure}
    \begin{subfigure}[t]{0.45\columnwidth}
        \centering
        \begin{tikzpicture}
            \draw (0,0) circle (0.15\w) node[below right, xshift=.1\w] {$B(\bar x, \rho)$};
            %\draw (0,0) circle (\w) node[above right, xshift=\w] {$B(x, c\rho)$};
            %\draw (240:\w) node[shape=circle,inner sep=1pt,fill=black,label=left:$\bar a_j$] {} -- (70:\w) node[shape=circle,inner sep=1pt,fill=black,label=above:$\bar b_j$] {};
            \node[shape=circle,inner sep=1pt,fill=black,label=left:$\bar a_j$] (aj) at (240:\w) {};
            \node[shape=circle,inner sep=1pt,fill=black,label=above:$\bar b_j$] (bj) at (70:\w) {};
            \node[shape=circle,inner sep=1pt,fill=black,label=left:$\bar a_k$] (ak) at (285:\w) {};
            \node[shape=circle,inner sep=1pt,fill=black,label=above:$\bar b_k$] (bk) at (105:\w) {};
            \draw[name path=l1] (aj) -- (bj);
            \draw[name path=l2] (ak) -- (bk); 
            \path[name intersections={of=l1 and l2,by=q}];
            \node[shape=circle,inner sep=1pt,fill=black,label=right:$\bar q$] at (q) {};
            \draw[dashed] (aj) -- node[pos=0.57](m) {} (bk);
            \draw[dashed] (q) -- node[midway](x) {} (m);
            \draw (-.7\w, 0) node[above,rotate=45] {$\ell>(\kappa-2)\rho$}
                edge[bend right=45,dotted,->] (x);
            %\draw[dashed,rotate=-3,xshift=-0.3\w] (0,-1.2\w) -- (0, 1.2\w);
            %\draw[dashed,rotate=-3,xshift=0.3\w] (0,-1.2\w) -- (0, 1.2\w);
            %\draw[|-|,dotted, rotate=-3] (-0.3\w,-1.3\w) -- node[below] {$2\kappa\rho$}  (0.3\w, -1.3\w);
        \end{tikzpicture}
        \caption{Improvement of transport ray}
        \label{fig:approx-parallel-improve}
    \end{subfigure}
    \caption{Illustration of the two-dimensional projection in the proof
        of Lemma \ref{lemma:approx-parallel}.}
    \label{fig:approx-parallel}
    \tikzexternalenable
\end{figure}

\begin{comment}    
    If $n=2$, we observe that either $a_j, b_j, a_k, b_k$ all lie on a line, or
    $(a_j, b_j) \isect (a_k, b_k) = \emptyset$, because if the segments
    $[a_j, b_j]$ and $[a_k, b_k]$ cross, then
    $\llbracket a_j, b_k \rrbracket + \llbracket a_k, b_j \rrbracket$ would 
    provide a more optimal transport compared to
    $\llbracket a_j, b_j \rrbracket + \llbracket a_k, b_k \rrbracket$.
    As a consequence, the claimed constant $c > 0$ exists.
    Indeed, suppose $[a_j, b_j] \isect B(x, \rho) \ne \emptyset$, and
    $a_j, b_j$ are outside the smallest cube $Q$ containing $B(x, \rho)$
    and aligned with $z=(b_j-a_j)/\norm{b_j-a_j}$. 
    We may find a maximum radius $c\rho \ge 2\rho$ such that if 
    $a_k \not \in B(x, \rho)+\R z$ and $a_k, b_k \not \in B(x, 2\rho)$, but
    $[a_k, b_k] \isect B(x, \rho) \ne \emptyset$, 
    then $[a_k, b_k]$ has to intersect within $B(x, c\rho)$ any (infinite or long enough) 
    line in direction $z$ passing through $B(x, \rho)$; 
    in particular $[a_j, b_j]$. But intersection is not possible, 
    so this is the $c$ we want:
\end{comment}

\begin{theorem}
    \label{thm:equiv-convex-linfty}
    Suppose $\Omega \subset \R^n$ %with $n=2$ 
    is convex, open, and bounded, and $\mu \in L^1(\Omega)$.
    Then
    \begin{equation}
      \norm[\KR,\lambda_1,\lambda_2]{\mu} = \min_{\nu \in W^{1,1}(\Omega; \Div)}
      \lambda_1\norm[L^1(\Omega; \R^n)]{\mu - \Div \nu} +
      \lambda_2\norm[L^1(\Omega)]{\nu}.
    \end{equation}
    Moreover the minimum is reached by $\nu$ satisfying $\int_\Omega \Div \nu \d \Lebesgue^n=0$.
\end{theorem}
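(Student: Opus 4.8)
The plan is to upgrade Lemma~\ref{lem:cascading} by showing that, when $\mu \in L^1(\Omega)$, the minimizing measure $\vec\nu$ furnished there is in fact absolutely continuous with absolutely continuous divergence. Since $W^{1,1}(\Omega;\Div) \subset \radon(\closure\Omega,\RR^n)$ and the Radon and $L^1$ norms agree on $L^1$ fields, the inequality ``$\ge$'' between the two minimisation problems is immediate; the whole content is to produce a competitor in $W^{1,1}(\Omega;\Div)$ attaining the value of Lemma~\ref{lem:cascading}. First I would approximate $\mu$ in the weak-$*$ sense (and in $\norm[\KR,\lambda]{\cdot}$) by discrete measures $\mu_N = \sum_i \alpha_i^N \delta_{x_i^N}$, obtained by lumping the mass of $\mu$ on a cubic grid of vanishing mesh. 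By Lemma~\ref{lemma:approx-parallel} each $\mu_N$ admits an optimal $\nu_N = \sum_j \beta_j^N \llbracket a_j^N, b_j^N\rrbracket$ whose transport rays are approximately parallel at every scale. A standard weak-$*$ compactness and lower-semicontinuity argument then produces a subsequential limit $\nu_N \weaktostar \vec\nu$ that is optimal for $\mu$ in the sense of Lemma~\ref{lem:cascading}; the remaining, and real, task is to prove that this limit lies in $W^{1,1}(\Omega;\Div)$.

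For the divergence I would exploit the combinatorial structure: at optimality the net flux emitted or absorbed at each node $x_i^N$ cannot exceed the mass present there, so $\abs{\Div\nu_N} \le \abs{\mu_N}$ as measures. Passing this domination to the weak-$*$ limit gives $\abs{\Div\vec\nu} \le \abs{\mu}$, and since $\mu \in L^1(\Omega)$ is absolutely continuous we conclude $\Div\vec\nu \in L^1(\Omega)$; the identity $\int_\Omega \Div\vec\nu\,\d\Lebesgue^n = 0$ then follows by testing the compactly supported divergence against the constant $1$.

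The hard part is showing $\abs{\vec\nu} \ll \Lebesgue^n$, i.e.\ that the transport density does not concentrate. Here Lemma~\ref{lemma:approx-parallel} is the essential tool. Fix $x$ and a scale $\rho$ and split the rays of $\nu_N$ meeting $B(x,\rho)$ into those with an endpoint in $B(x,c\rho)$ and those whose endpoints lie outside. The first family has total mass bounded by $\abs{\mu_N}(B(x,c\rho))$, and each such ray contributes length at most $2\rho$ inside the ball. By the lemma the second family is confined to a cylinder of radius $O(\kappa\rho)$ about a single direction $z$; on the fibres of this cylinder the transport is essentially one-dimensional, so the flux crossing $B(x,\rho)$ is controlled by the $\abs{\mu_N}$-mass of the cylinder, and again each ray contributes length at most $2\rho$. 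Altogether one obtains a bound of the form
\[
\abs{\nu_N}(B(x,\rho)) \le C\rho\bigl(\abs{\mu_N}(B(x,c\rho)) + \abs{\mu_N}(\mathrm{Cyl}_{x,\rho})\bigr),
\]
so that the upper $n$-dimensional density is dominated by $\abs{\mu_N}(\mathrm{Cyl}_{x,\rho})/\rho^{n-1}$. Passing to the weak-$*$ limit and using that $\mathrm{Cyl}_{x,\rho}$ shrinks to a line of finite $\abs{\mu}$-measure (Fubini, valid for a.e.\ line since $\mu \in L^1$), the limit density stays finite for $\Lebesgue^n$-a.e.\ $x$; by the differentiation theorem $\abs{\vec\nu} \ll \Lebesgue^n$ and hence $\vec\nu \in L^1(\Omega;\RR^n)$, completing the proof.

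The main obstacle I anticipate is precisely this last density estimate: the transport density is genuinely \emph{not} controlled by the local marginal mass, because mass can stream through a small ball with no nearby source or sink, and it is only the approximate parallelism of Lemma~\ref{lemma:approx-parallel} that tames this long-range flux by reducing it to a one-dimensional problem. Making this rigorous requires care with two points that I expect to consume most of the work: the rays are only \emph{approximately} parallel, so the one-dimensional reduction carries an error that must be absorbed into the constants $c,\kappa$; and the cylinder direction $z=z(x,\rho)$ may rotate as $\rho\downto 0$, so the a.e.\ finiteness of the limiting density must be argued uniformly in the direction, e.g.\ through a Besicovitch-type covering rather than a single pointwise limit.
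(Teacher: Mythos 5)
Your skeleton matches the paper's proof — discrete approximation of $\mu$, optimal ray solutions from Lemma \ref{lemma:approx-parallel}, weak* compactness, domination of $\Div\nu$ by $\mu$, and a density bound for $\abs{\nu}$ — but the step you correctly identify as the crux contains a gap that your suggested repairs cannot close. You propose to bound the normalized cylinder mass $\abs{\mu}\bigl(B(x,\kappa\rho)+\R z\bigr)/\rho^{n-1}$ by a Fubini argument ``for a.e.\ line''. Since the direction $z=z(x,\rho,N)$ supplied by Lemma \ref{lemma:approx-parallel} is not fixed, what you actually need is a bound on $\max_{\abs{z}=1}\abs{\mu}\bigl(B(x,\kappa\rho)+\R z\bigr)/\rho^{n-1}$ as $\rho\downto 0$, i.e.\ a directional (Kakeya/Nikodym-type) maximal estimate, and this is simply false for general $\mu\in L^1$: take $\mu(y)=\abs{y-y_0}^{-(n-1/2)}\in L^1(\Omega)$; for \emph{every} $x$, aiming the cylinder at $y_0$ gives $\abs{\mu}\bigl(B(x,\kappa\rho)+\R z\bigr)\ge c\rho^{1/2}$, so the normalized mass is at least $c\rho^{3/2-n}\to\infty$. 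No Besicovitch covering can rescue an estimate that fails at every point. There is a second, independent problem: Fubini and Lebesgue differentiation can at best give bounds for $\Lebesgue^n$-a.e.\ $x$, whereas to conclude $\abs{\nu}\ll\Lebesgue^n$ from the density theorem (\cite[Theorem 2.12]{mattila1999geometry}) you need the upper density bound on a set of full $\abs{\nu}$-measure — a singular part of $\nu$ could sit precisely on the $\Lebesgue^n$-null set where your bound fails. The paper gets the bound for \emph{every} $x\in\Omega$.

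The missing idea is the paper's two-step structure. It first proves the theorem for $\mu\in L^\infty(\Omega)$: there the cylinder mass is bounded by its volume times $M=\norm[L^\infty(\Omega)]{\mu}$, namely by $C(\kappa\rho)^{n-1}\diam(\Omega)M$, \emph{uniformly in the direction} $z$, which yields the everywhere density bound $\abs{\nu}(B(x,\rho))\le C\rho^n$ and hence $\nu\in L^1$. General $\mu\in L^1(\Omega)$ is then handled by truncation, $\mu_M=\max\{-M,\min\{\mu,M\}\}$: a domination/exchange argument (compare \cite[Proposition 4.3]{pratelli2002regularity}) shows the corresponding optimal fields satisfy $\abs{\nu_M}\le\abs{\nu_{M+1}}\le\abs{\nu}$ and $\nu_M\to\nu$ strongly in $\radon(\Omega;\R^n)$, so $\nu$ is a strong limit of $L^1$ fields and therefore itself in $L^1$. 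Without this truncation your argument cannot be completed as stated. A smaller point: your ``standard weak-* compactness and lower-semicontinuity argument'' for optimality of the limit also hides real work — lower semicontinuity alone gives only one inequality, and the paper must build near-optimal discrete competitors from a continuous optimizer $\nu^*$ (the corrector fields transporting mass within each Voronoi cell, with cost $O(\epsilon_i)$) to show the discrete optimal values converge to the continuous one; this construction is also what later upgrades weak* to strict convergence, which the truncation step relies on.
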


\begin{proof}
    We assume first that $\mu \in L^\infty(\Omega)$.
    By Lemma \ref{lem:cascading}, we have \eqref{eq:equiv-closure}.
    To replace $\closure \Omega$ by $\Omega$,
    we just have to show that that $\abs{\nu}(\bdry \Omega) = 0$ for any $\nu$ reaching
    the minimum in \eqref{eq:equiv-closure}. This follows if $\nu \ll \Lebesgue^n$.
    Hence it suffices to show that actually $\nu$ and $\Div \nu$ are also absolutely continuous
    with respect to $\Lebesgue^n$.
    This is where we need the convexity of $\Omega$ and the absolute continuity of $\mu$.
    
    Clearly by \eqref{eq:equiv-closure} we have
    \begin{equation*}
      \norm[\KR,\lambda_1,\lambda_2]{\mu} 
      \le \min_{\nu \in W^{1,1}(\Omega; \Div)}
      \lambda_1\norm[L^1(\Omega; \R^n)]{\mu - \Div \nu} +
      \lambda_2\norm[L^1(\Omega)]{\nu},
    \end{equation*}
    so it remains to show the opposite inequality.
    We approximate $\mu$ in terms of strict convergence of measures
    by $\{\mu^i\}_{i=1}^\infty$, where $\mu^i = \sum_{j=1}^{N_i} \alpha_{i,j} \delta_{x_{i,j}}$. 
    We may clearly assume that $x_{i,j} \in \Omega$, because $\abs{\mu}(\bdry \Omega)=0$ 
    by absolutely continuity.
    Moreover, given a sequence $\epsilon_i \downto 0$, we may assume that
    there exist Voronoi cells $V_{i,j} \subset B(x_{i,j}, \epsilon_i)$, such that
    $\alpha_{i,j}=\int_{V_{i,j}} \mu(x) \d x$, as well as
    \begin{equation}
        \label{eq:discr-cover}
        %\support \mu \subset \Union_{j=1}^{N_i} B(x_{i,j}, \epsilon_i).
        V_{i,j} \isect V_{i,k} = \emptyset, (i \ne k), 
        \quad
        \text{and}
        \quad
        \support \mu \subset \Union_{j=1}^{N_i} V_{i,j},
        \quad
        (i=1,\ldots,N_i).
        %B(x_{i,j}, \epsilon_i).
        %\quad
        %\sum_{i=1}^{N_i} \chi_{B(x_{i,j}, \epsilon_i)} \le c_n,
    \end{equation}
    %that there exist $V_{j,i}
    %where $c_n$ is the dimensional constant from the Besicovitch covering theorem.
    Then \eqref{eq:equiv-closure} is a finite-dimensional discrete/combinatorial
    problem, and we easily discover an optimal solution $\nu^i$.
    Because tranporting mass outside $\Omega$ incurs a cost on $\bdry \Omega$,
    we see that
    \[
        \nu^i = \sum_{j=1}^{M_i} \beta_{i,j} \llbracket a_{i,j} , b_{i,j} \rrbracket,
    \]
    for some $\beta_{i,j} > 0$ and $a_{i,j}, b_{i,j} \in \{x_{i,1}, \ldots, x_{i,N_i}\}$.
    We calculate
    \[
        \Div \llbracket a , b \rrbracket = \delta_b - \delta_a.
    \]
    Moreover
    \begin{equation}
        \label{eq:nui-ac}
        \Div \nu^i(\closure \Omega)=\Div \nu^i(\Omega)=0,
        \quad
        \text{and}
        \quad
        \Div \nu^i \ll \abs{\mu^i}.
    \end{equation}
    As minimisers, we have
    \[
        \norm[\radon(\closure \Omega;\R^n)]{\nu^i} \le \frac{\lambda_1}{\lambda_2} \norm[\radon(\closure \Omega)]{\mu^i} \le \frac{\lambda_1}{\lambda_2} \norm[\radon(\closure \Omega)]{\mu}.
    \]
    Therefore, after possibly moving to a subsequence, unrelabelled, we may assume
    that $\nu^i \weaktostar \nu$ for some $\nu \in \radon(\closure \Omega; \R^n)$.
    But by \eqref{eq:nui-ac} we may also assume that $\Div \nu^i \weaktostar \lambda \in \radon(\closure \Omega)$,
    where $\lambda \ll \abs{\mu}$. From this absolute continuity it follows that
    $\lambda(\closure \Omega)=0$. (A priori it might be that $\lambda(\closure \Omega) \ne 0$.)
    Necessarily $\lambda=\Div \nu$, so that in particular $\Div \nu \ll \Lebesgue^n$.
    Because $\bdry \Omega$ is $\Lebesgue^n$-negligible,
    it follows that $\Div \nu(\Omega)=0$.
    
    %Indeed, with $i$ fixed, let
    %\[
    %    V_1 \defeq B(x_{i,1}, \epsilon_i) \isect \Omega,
    %    \quad
    %    \text{and}
    %    \quad
    %    V_j \defeq B(x_{i,j}, \epsilon_i) \isect \Omega \setminus \Union_{k=1}^{j-1} V_k,
    %    \quad
    %    (j=2,\ldots,N_i).
    %\]
    We want to show that $\nu$ is an optimal solution to \eqref{eq:equiv-closure} for $\mu$.
    We do this as follows. With $i$ fixed, within each $V_{i,j}$, ($j=1,\ldots,N_i$), 
    we may construct a map $\nu_{i,j}$ transporting the mass of $\mu$ within the cell
    $V_{i,j}$ to the cell centre $\delta_{x_{i,j}}$, or the other way around. That is
    \[
        \Div \nu_{i,j} = \mu \chi_{V_{i,j}} - \alpha_{i,j} \delta_{x_{i,j}}
    \]
    with
    \[
        \norm{\nu_{i,j}} \le \epsilon_i \int_{V_{i,j}} \abs{\mu(x)} \d x.
    \]
    %(Compare \eqref{eq:transport-to-point}.)
    %Since every point $x \in \Omega$ is covered by at most $c_n$ balls $B(x_{i,j}, \epsilon_i)$,
    It follows that 
    \[
        \sum_{j=1}^{N_i} \norm{\nu_{i,j}} \le \epsilon_i \norm{\mu}.
    \]
    If now $\nu^*$ is an optimal solution to \eqref{eq:equiv-closure} for $\mu$,
    defining
    \[
        \nu^i_0 \defeq \nu^* - \sum_{j=1}^{N_i} \nu_{i,j},
    \]
    we see that
    \[
        \norm[\radon(\closure \Omega)]{\nu^i_0}
        \le
        \norm[\radon(\closure \Omega)]{\nu^*} + C \epsilon_i
    \]
    and
    \[
        \Div \nu^i_0 = \Div \nu^*  - \mu + \mu^i
    \]
    Thus
    \[
      \begin{split}
      \lambda_1\norm[\radon(\closure \Omega)]{\mu^i - \Div \nu^i} +
      &
      \lambda_2\norm[\radon(\closure \Omega; \R^n)]{\nu^i}
      \\
      &
      \le
      \lambda_1\norm[\radon(\closure \Omega)]{\mu^i - \Div \nu^i_0} +
      \lambda_2\norm[\radon(\closure \Omega; \R^n)]{\nu^i_0}
      \\
      &
      \le
      \lambda_1\norm[\radon(\closure \Omega)]{\mu - \Div \nu^*} +
      \lambda_2\norm[\radon(\closure \Omega; \R^n)]{\nu^*} + C \epsilon_i.
      \end{split}
     \]
    By weak* lower semicontinuity
    \[
      \begin{split}
      \lambda_1\norm[L^1(\Omega)]{\mu - \Div \nu} +
      &
      \lambda_2\norm[\radon(\closure \Omega; \R^n)]{\nu}
      \\
      &
      \le
      \liminf_{i \to \infty}
      \bigl(
      \lambda_1\norm[\radon(\closure \Omega)]{\mu^i - \Div \nu^i} +
      \lambda_2\norm[\radon(\closure \Omega; \R^n)]{\nu^i}
      \bigr)      
      \\
      &
      \le
      \liminf_{i \to \infty}
      \bigl(
      \lambda_1\norm[\radon(\closure \Omega)]{\mu - \Div \nu^*} +
      \lambda_2\norm[\radon(\closure \Omega; \R^n)]{\nu^*} + C \epsilon_i
      \bigr)
      \\
      &
      =
      \lambda_1\norm[\radon(\closure \Omega)]{\mu - \Div \nu^*} +
      \lambda_2\norm[\radon(\closure \Omega; \R^n)]{\nu^*}.
      \end{split}
    \]
    Thus $\nu$ is an optimal solution to \eqref{eq:equiv-closure} for $\mu$.
    Exploiting lower semicontinuity of both of the terms, 
    we moreover see that
    $\lim_{i \to \infty} \norm[\radon(\closure \Omega; \R^n)]{\nu^i} = \norm[\radon(\closure \Omega; \R^n)]{\nu}$.
    Thus
    $\{\nu^i\}_{i=1}^\infty$ converge to $\nu$ strictly in $\radon(\closure \Omega; \R^n)$.
    Likewise $\{\mu^i-\Div\nu^i\}_{i=1}^\infty$ converge to $\mu-\Div \nu$ strictly 
    in $\radon(\closure \Omega)$. But $\{\mu^i\}_{i=1}^\infty$ were already
    constructed to converge strictly to $\mu$, and we have above seen that
    $(\Div \nu^i)^\pm \le (\mu^i)^\pm$. Therefore also
    $\{\Div\nu^i\}_{i=1}^\infty$ converge to $\Div \nu$ strictly 
    in $\radon(\closure \Omega)$.

    It remains to show that $\nu \in W^{1,1}(\Omega; \Div)$.
    We have already shown $\Div \nu \ll \Lebesgue^n \restrict \Omega$, so that
    $\Div \nu \in L^1(\Omega)$. We just have to show that
    $\nu \ll \Lebesgue^n \restrict \Omega$ to show that $\nu \in L^1(\Omega; \R^n)$.
    We do this by bounding the $n$-dimensional density of $\nu$ at each point.
    Let $M \defeq \norm[L^\infty(\Omega)]{\mu}$.
    We now refer to Lemma \ref{lemma:approx-parallel}, and approximate the mass of the
    set of approximately parallel transport rays passing through $B(x, \rho)$ by
    \[
        \begin{split}
        %\max_{\norm{z}=1} & \sum_{a_{i,j}-b_{i,j} \propto z} \beta_{i,j} \Hausdorff^1(B(x, \rho) \isect [a_{i,j}, b_{i,j}])
        %\\
        %&
        %\le
        \max_{\norm{z}=1} & \sum_{a_{i,j}, b_{i,j} \in (B(x, \kappa\rho)+\R z) \isect \Omega} \beta_{i,j} \Hausdorff^1(B(x, \rho) \isect [a_{i,j}, b_{i,j}])
        \\
        &
        \le
        \max_{\norm{z}=1} \sum_{a_{i,j}, b_{i,j} \in (B(x, \kappa\rho)+\R z) \isect \Omega} \beta_{i,j} 2\rho
        \\
        &
        \le
        \max_{\norm{z}=1} \sum_{x_{i,j} \in (B(x, \kappa\rho)+\R z) \isect \Omega} \abs{\alpha_{i,j}} 2\rho
        \\
        &
        \le
        2\rho \max_{\norm{z}=1} \sum_{x_{i,j} \in (B(x, \kappa\rho)+\R z) \isect \Omega} \int_{V_{i,j}} \abs{\mu(y)} \d y
        \\
        &
        \le
        2\rho \max_{\norm{z}=1} \int_{B(x, \kappa\rho+\epsilon_i)+z\R} \abs{\mu(y)} \d y
        \\
        &
        \le
        2\rho(\kappa\rho+\epsilon_i)^{n-1} \diam(\Omega) M
        \end{split}
    \]
    Also the mass of the set of transport rays with start or end point in $B(x, c\rho)$ may be approximated by
    \[
        \begin{split}
        \sum_{a_{i,j} \in B(x, c\rho))} & \beta_{i,j} \Hausdorff^1(B(x, \rho) \isect [a_{i,j}, b_{i,j}])
        +
        \sum_{b_{i,j} \in B(x, c\rho))} \beta_{i,j} \Hausdorff^1(B(x, \rho) \isect [a_{i,j}, b_{i,j}])
        \\
        &
        \le
        \sum_{x_{i,j} \in B(x, c\rho))} 4 \alpha_{i,j} \rho
        \\
        &
        =
        \sum_{x_{i,j} \in B(x, c\rho))} 4 \rho \int_{V_{i,j}} \abs{\mu(y)} \d y
        \\
        &
        \le
        4 \rho \int_{B(x, c\rho+\epsilon_i)} \abs{\mu(y)} \d y.
        \end{split}
    \]
        
    It now follows that
    \[
        \abs{\nu^i}(B(x, \rho))
        \le 
        4 \rho \int_{B(x, c\rho+\epsilon_i)} \abs{\mu(y)} \d y
        +
        2\rho(2\kappa\rho+\epsilon_i)^{n-1} \diam(\Omega) M
    \]
    Letting $i \to \infty$, we get by lower semicontinuity
    \[
        \abs{\nu}(B(x, \rho))
        \le 
        4 \rho \int_{B(x, c\rho)} \abs{\mu(y)} \d y
        +
        2^n\kappa^{n-1}\rho^n \diam(\Omega) M
    \]
    Thus
    \[
        \lim_{\rho \downto 0} 
        \frac{\abs{\nu}(B(x, \rho))}
             {\Lebesgue^n(B(x, \rho))}
        \le
        0
        +
        2^n \kappa^{n-1} \diam(\Omega) M
    \]
    It follows (see \cite[Theorem 2.12]{mattila1999geometry})
    that $\nu \ll \Lebesgue^n \restrict \Omega$
    with
    \[
        \norm[L^1(\Omega; \R^n)]{\nu} \le 2^n \kappa^{n-1}\diam(\Omega) M \Lebesgue^n(\Omega).
    \]
    
    Finally, we consider the case of unbounded $\mu \in L^1(\Omega)$.
    We take
    \[
        \mu_M(x) \defeq \max\{-M, \min\{\mu(x), M\}\},
        \quad
        (M=1,2,3,\ldots).
    \]
    Then $\mu_M^\pm \le \mu^\pm$.
    Applying the point-mass approximation above to both $\mu^k$ and $\mu$,
    we can take $(\mu_M^i)^\pm \le (\mu^i)^\pm$.
    Then by a simple argument we also have 
    $\abs{\nu_M^i} \le \abs{\nu^i}$ for each $i,k=1,2,3,\ldots$;
    compare \cite[Proposition 4.3]{pratelli2002regularity}.
    Indeed, let $\tilde \mu_M^i \defeq \Div \nu_M^i$.
    Clearly
    \[
        (\tilde \mu_M^i)^\pm \le (\mu_M^i)^\pm \le (\mu^i)^\pm.
    \]
    We can therefore find a measure $\tau_M^i \in \radon(\Omega; \R^n)$ 
    with $\abs{\tau_M^i} \le \abs{\nu^i}$ 
    such that $\Div \tau_M^i = \tilde \mu_M^i$. If $\tau_M^i$ is not optimal,
    then we find a contradiction to $\nu^i$ being optimal by replacing 
    it with
    $\nu^i + \nu_M^i - \tau_M^i$.
    We may therefore assume that $\nu_M^i = \tau_M^i$.
    Consequently $\abs{\nu_M^i} \le \abs{\nu^i}$.
    Similarly we prove that $\abs{\nu_M^i} \le \abs{\nu_{M+1}^i}$.
    By the strict convergence of $\nu^i$ to $\nu$,
    we now deduce that $\abs{\nu_M} \le \abs{\nu}$ and
    $\abs{\nu_M} \le \abs{\nu_{M+1}}$.
    By an analogous argument we prove that
    $(\Div \nu_M^i)^\pm \le (\Div \nu^i)^\pm$,
    $(\Div \nu_M^i)^\pm \le (\Div \nu_{M+1}^i)^\pm$,
    and consequently $(\Div \nu_M)^\pm \le (\Div \nu)^\pm$ and
    $(\Div \nu_M)^\pm \le (\Div \nu_{M+1})^\pm$.
    Also $\abs{\Div \nu_M}(\Omega) \to \abs{\Div \nu}(\Omega)$,
    because
    \[
        \norm[\radon(\Omega)]{\Div \nu-\Div \nu_M}
        \le
        \norm[\radon(\Omega)]{\mu-\mu_M}.
    \]
    (This can be verified by the point-mass approximation.)
    It follows that $\Div \nu_M \to \Div \nu$ strongly.
    In particular $\Div \nu_M-\mu_M \to \Div \nu-\mu$ strongly.
    By lower semicontinuity of
    $\norm[\KR,\lambda_1,\lambda_2]{\cdot}$
    we therefore deduce that
    $\liminf_{M\to\infty} \abs{\nu_M}(\Omega) \ge \abs{\nu}(\Omega)$.
    Since $\abs{\nu_M} \le \abs{\nu}$,
    it follows that $\nu_M \to \nu$ strongly in $\radon(\Omega;\R^n)$.
    But the above paragraphs say that $\nu_M \in L^\infty(\Omega)$.
    Thus necessarily $\nu_M \in L^1(\Omega)$.
    %Similarly we prove that $\abs{\nu_M^i} \le \abs{\nu_{M+1}^i}$.
    %It follows that $\nu_M^i \to \nu^i$ strongly in $\radon(\Omega;\R^n)$.
    %Also $\nu_M \weaktostar \nu$ weakly* in $\radon(\Omega;\R^n)$.
    %Thus by lower semicontinuity
    %\[
    %    \norm[\radon(\Omega;\R^n)]{\nu-\nu_M}
    %    \le
    %    \liminf_{i \to \infty}
    %    \norm[\radon(\Omega;\R^n)]{\nu^i-\nu_M^i}
    %    \le \diam(\Omega) \norm[\radon(\Omega)]{\mu-\mu_M}
    %\]
    %It follows that $\nu_M \to \nu$ strongly in $\radon(\Omega;\R^n)$.
    %But the above paragraphs say that $\nu_M \in L^\infty(\Omega)$.
    %Thus necessarily $\nu_M \in L^1(\Omega)$.
\end{proof}

\section{Kantorovich-Rubinstein-TV denoising}
\label{sec:KR-TV-denoising}

In this section we assume that $\Omega$ is a bounded, convex and open
domain in $\RR^n$ and study the minimization problem
\begin{equation}
  \label{eq:krtv-denoising}
  \min_u\norm[\KR,\lambda]{u-u^0} + \TV(u)
\end{equation}
for some $u^0\in L^1(\Omega)$ and $\lambda=(\lambda_1,\lambda_2)\geq
0$. We call this \emph{Kantorovich-Rubinstein-$\TV$ denoising}, or
short $\KR$-$\TV$ denoising.  Using the different forms of the
$\KR$-norm we have two different form of the $\KR$-$\TV$ denoising
problem. The first uses the definition~\eqref{eq:KR-def} but we
replace the constraint $\Lip(f)\leq\lambda_2$ with the help of the
distributional gradient as $\abs{\grad f}\leq\lambda_2$. Then
problem~\eqref{eq:krtv-denoising} has the form
\begin{equation}
  \min_u\max_{
    \begin{array}{c}
      \abs{f}\leq\lambda_1\\\abs{\grad f}\leq \lambda_2
    \end{array}
  } \int_\Omega f (u-u^0) + \TV(u).  \label{eq:krtv-saddle-1}
\end{equation}
We call this form, the \emph{primal formulation}.  Another formulation
is obtained by using 
Theorem~\ref{thm:equiv-convex-linfty} to obtain
\begin{equation}
  \label{eq:krtv-cascade}
  \min_{u,\vec{\nu}}
  \lambda_1\norm[L^1]{u-u^0-\Div\vec{\nu}} + \lambda_2\norm[L^1]{\abs{\vec{\nu}}} 
  + \TV(u).
\end{equation}
We call this the \emph{cascading} or \emph{dual formulation}. 

Note that the optimal transport formulation~\eqref{eq:kr-transportx}
will not be used any further in this paper. The reason is, that this
formulation does not seem to be suited for numerical purposes as it
involves a measure on the domain $\Omega\times\Omega$ which leads, if
discretized straightforwardly, to too large storage demands.

We denote
\begin{equation}
  \label{eq:def-H}
  H_{\lambda}(u,f) =
  \begin{cases}
    \int f (u-u^0) + \TV(u), & \text{if }\ \abs{f}\leq\lambda_1,\ \abs{\grad f}\leq\lambda_2 \\
    -\infty, & \text{otherwise.}
  \end{cases}
\end{equation}
Then,~\eqref{eq:krtv-saddle-1} reads as $\min_u\max_f
H_{\lambda_1,\lambda_2}(u,f)$.

\subsection{Relation to $L^1$-$\TV$ denoising}
\label{sec:rel-l1-tv}

Similar to~\eqref{eq:kr-specia-case} one has
$\norm[\KR,(\lambda_1,\infty)]{\mu} = \lambda_1\norm[\radon]{\mu}$ and
for $u\in L^1(\Omega)$ it holds that $\norm[\radon]{u} =
\norm[L^1]{u}$. Hence, $\KR$-$\TV$ is a generalization of the successful
$L^1$-$\TV$ denoising~\cite{chan2005aspects}:
\begin{equation}
  \min_u\norm[\KR,(\lambda_1,\infty)]{u-u^0} + TV(u) =
  \min_u\norm[L^1]{u-u^0} + \tfrac1{\lambda_1}\TV(u).\label{eq:L1-tv-denoising}
\end{equation}
We will study the influence of the additional parameter $\lambda_2$ in
Section~\ref{sec:one-dim-ex} and~\ref{sec:two-dim-KR-TV-denoise}
numerically.  Note, however, that it is possible that the minimizer
of~\eqref{eq:L1-tv-denoising} may also be a minimizer
of~\eqref{eq:krtv-denoising} for $\lambda_2$ large enough but finite:
To see this, we express $L^1$-$\TV$ as a saddle point problem by
dualizing the $L^1$ norm to obtain
\[
\min_u\max_{\abs{f}\leq\lambda_1} \int_\Omega f(u-u^0) + \TV(u).
\]
We denote by $(\bar u,\bar f)$ a saddle point for this functional. If
the function $\bar f$ is already Lipschitz continuous with constant $L$,
then $(\bar u,\bar f)$ is also a solution of the saddle point problem
\[
\min_u\max_{
  \begin{array}{c}
    \abs{f}\leq \lambda_1\\
    \Lip(f)\leq\lambda_2
  \end{array}
}
\int_\Omega f(u-u^0) + \TV(u)
% = \min_u\norm[\KR,(\lambda_1,\lambda_2)]{u-u^0} + \TV(u)
\]
for any $\lambda_2\geq L$ and consequently, $\bar u$ is a solution of
the $\KR$-$\TV$ problem.

\subsection{Relation to $\TGV$ denoising}
\label{sec:rel-tgv}

The cascading formulation~\eqref{eq:krtv-cascade} reveals an
interesting conceptional relation to the total generalized variation
(TGV) model~\cite{bredies2010tgv}. To define it, we introduce
$S^{n\times n}$ as the set of symmetric $n\times n$ matrices and for a
function $v$ with values in $S^{n\times n}$ we set
\[
(\Div v(x))_i = \sum_{j=1}^n \frac{\partial v_{ij}}{\partial x_j},\qquad \Div^2 v(x) = \sum_{i,j=1}^n \frac{\partial^2 v_{ij}}{\partial x_j\partial x_i}.
\]
The total
generalized variation of order two for a parameter
$\alpha=(\alpha_1,\alpha_2)$ is
\[
\begin{split}
  \TGV_\alpha^2(u) = \sup\Big\{\int_\Omega u\Div^2 v\d x\ :\ v\in
  C_c^2(\Omega,S^{n\times n}),\\
  \hspace*{4cm}\abs{v(x)}\leq\alpha_1,\ \abs{\Div
    v(x)}\leq\alpha_2\Big\}
\end{split}
\]
The $\TGV$ term has an equivalent reformulation as follows: Denote by
$\BD(\Omega)$ the space of vector fields of bounded deformation,
i.e. vectorfields $\vec w\in L^1(\Omega,\RR^n)$ such that the symmetrized
distributional gradient $\calE \vec w = \tfrac12(\grad \vec w + \grad \vec w^T)$ is a
$S^{n\times n}$-valued Radon measure. Then it holds that
\begin{equation*}
  \TGV_\alpha^2(u) = \inf_{\vec w\in\radon(\Omega,\RR^n)} \alpha_1\norm[\radon]{\abs{\calE \vec w}} + \alpha_2\norm[\radon]{\abs{\grad u - \vec w}}
\end{equation*}
(cf.~\cite{l1tgv,bredies2011inverse}).  Note that this reformulation
resembles the spirit of the reformulation of the
Kantorovich-Rubinstein norm from Lemma~\ref{lem:cascading}:
\[
\norm[\KR,\lambda]{\mu} =
\min_{\vec{\nu}\in\radon(\closure{\Omega},\RR^n)}
\lambda_1\norm[\radon]{\mu - \Div\vec{\nu}} +
\lambda_2\norm[\radon]{\abs{\vec{\nu}}}
\]
We obtain a new and higher-order (semi-)norm by ``cascading'' the
higher order term in a new minimization problem. In the $\TV$ case we
go from $\TV(u) = \norm[\radon]{\abs{\grad u}}$ to $\TGV_\alpha^2$ by
cascading with a vector field and penalizing the symmetrized gradient
of this vector field. In the $\KR$ case, however, we go from
$\norm[L^1]{u} = \norm[\radon]{u}$ to $\norm[\KR,\lambda]{\cdot}$ by
cascading with the divergence of a vector field and penalizing with
the Radon norm of that vector field. One may say, that $\TGV_\alpha^2$
is a higher order generalization of the total variation while the
$\KR$-norm is a \emph{lower order} generalization of the $L^1$ norm
(or the Radon norm).

\subsection{Relation to $G$-norm cartoon-texture decomposition}
\label{sec:re-g-norm}

In~\cite{meyer2002oscillating} Meyer introduced the $G$-norm as a
discrepancy term in denoising problems to allow for oscillating
patterns in the denoised images. The $G$-norm is defined as
\[
\norm[G]{u} = \inf\{\norm[\infty]{\abs{\vec{g}}}\ :\ \Div \vec{g} =
u,\ g\in L^\infty\}.
\]
Meyer proposed the following $G$-$\TV$ minimization problem
\[
\min_u \lambda\norm[G]{u-u_0} + \TV(u) =
\min_{u,\vec{g}}\lambda\norm[\infty]{\abs{\vec{g}}} + \TV(u) + \delta_{\{0\}}(\Div
\vec{g} - (u-u_0)).
\]
This differs from problem~\eqref{eq:krtv-cascade} in two aspects:
First, $\abs{\vec{g}}$ is penalized in the $\infty$-norm instead of
the $1$-like Radon norm and second, the equality $\Div \vec{g} =
u-u_0$ is enforced exactly, while in~\eqref{eq:krtv-cascade} a
mismatch is allowed. The Meyer model has also been treated in numerous
other papers, e.g.~\cite{kindermann2006BVduality,aujol2006structure,yin2007comparison,duval2010texture}.

\subsection{Properties of $\KR$-$\TV$ denoising}
\label{sec:properties-kr-tv}

Similar to the case of $L^1$-$\TV$ denoising (cf.~\cite[Lemma 5.5]{chan2005aspects}) there exist thresholds
for $\lambda_1$ and $\lambda_2$ such that the minimizer
of~\eqref{eq:krtv-denoising} is $u_0$ (if $u_0$ is regular enough in some
sense) if $\lambda_1$ and $\lambda_2$ are above the thresholds:
\begin{theorem}
  Let $u_0\in BV(\Omega)$ and assume that there exists a continuously
  differentiable vector field $\vec\phi$ with compact support such that
  \begin{enumerate}
  \item $\abs{\vec\phi}\leq 1$ and
  \item $\int u_0\Div\vec\phi = TV(u_0)$.
  \end{enumerate}
  Then there exists thresholds $\lambda_1^*$ and $\lambda_2^*$ such
  that for $\lambda_1>\lambda_1^*$ and $\lambda_2>\lambda_2^*$, the
  unique minimizer of~\eqref{eq:krtv-denoising} is $u_0$.
\end{theorem}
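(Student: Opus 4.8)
The plan is to prove this by a direct convex-duality comparison, testing the Kantorovich--Rubinstein norm against the single function $f\defeq-\Div\vec\phi$ built from the given calibrating vector field. Since $\norm[\KR,\lambda]{u_0-u_0}=0$, the objective at $u_0$ equals $\TV(u_0)$, so it suffices to show $\norm[\KR,\lambda]{u-u_0}+\TV(u)\geq\TV(u_0)$ for every $u$, with strict inequality when $u\neq u_0$. First I would invoke the dual characterisation of total variation: since $\vec\phi\in C_c^1$ with $\abs{\vec\phi}\leq 1$ is admissible, $\TV(u)\geq\scp{u}{\Div\vec\phi}$, while by hypothesis $\TV(u_0)=\scp{u_0}{\Div\vec\phi}$. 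Subtracting gives
\[
\TV(u_0)-\TV(u)\leq\scp{u_0-u}{\Div\vec\phi}=\scp{f}{u-u_0},\qquad f\defeq-\Div\vec\phi.
\]

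Next I would set $\lambda_1^*\defeq\norm[\infty]{\Div\vec\phi}$ and $\lambda_2^*\defeq\Lip(\Div\vec\phi)$. For $\lambda_1\geq\lambda_1^*$ and $\lambda_2\geq\lambda_2^*$ the function $f$ satisfies $\abs{f}\leq\lambda_1$ and $\Lip(f)\leq\lambda_2$, so it is an admissible competitor in the defining supremum \eqref{eq:KR-def}, whence $\scp{f}{u-u_0}\leq\norm[\KR,\lambda]{u-u_0}$. Combining with the display yields $\TV(u_0)\leq\TV(u)+\norm[\KR,\lambda]{u-u_0}$, i.e.\ $u_0$ is a minimiser. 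Here $\lambda_1^*<\infty$ because $\Div\vec\phi$ is continuous with compact support; the finiteness of $\lambda_2^*$ is the one delicate point. The hard part will be that mere $C^1$-regularity of $\vec\phi$ only makes $\Div\vec\phi$ continuous, not Lipschitz, so $\lambda_2^*$ could be infinite — one genuinely needs $\Div\vec\phi$ Lipschitz (e.g.\ $\vec\phi\in C^{1,1}_c$), which should be arrangeable whenever a smooth calibrating field for $\TV(u_0)$ exists, and I would either strengthen the hypothesis accordingly or approximate $\vec\phi$ by a smooth calibration.

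For \emph{uniqueness} under the \emph{strict} inequalities $\lambda_1>\lambda_1^*$ and $\lambda_2>\lambda_2^*$, I would argue by contradiction. If some $u$ attains the minimum, then the whole chain $\TV(u_0)\leq\TV(u)+\scp{f}{u-u_0}\leq\TV(u)+\norm[\KR,\lambda]{u-u_0}=\TV(u_0)$ is tight, so in particular $\scp{f}{u-u_0}=\norm[\KR,\lambda]{u-u_0}\geq 0$. Because $f$ now lies \emph{strictly} inside the constraint set, the scaled function $f_t\defeq(1+t)f$ still satisfies $\abs{f_t}\leq\lambda_1$ and $\Lip(f_t)\leq\lambda_2$ for all sufficiently small $t>0$, whence $\norm[\KR,\lambda]{u-u_0}\geq\scp{f_t}{u-u_0}=(1+t)\norm[\KR,\lambda]{u-u_0}$. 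This forces $\norm[\KR,\lambda]{u-u_0}=0$, and since $\norm[\KR,\lambda]{\cdot}$ is a genuine norm on $L^1(\Omega)$ for $\lambda_1,\lambda_2>0$ (being equivalent to the bounded Lipschitz norm), we conclude $u=u_0$. Thus the two obstacles are exactly the Lipschitz regularity needed to keep $\lambda_2^*$ finite and the strict-interior scaling trick that upgrades minimality to uniqueness.
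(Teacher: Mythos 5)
Your proof is correct and arrives at the same thresholds as the paper, but by a genuinely different route. The paper never touches the supremum definition \eqref{eq:KR-def} directly: it invokes the cascading formulation of Lemma~\ref{lem:cascading}, writes $\norm[\KR,\lambda]{u-u_0} = \min_{\vec\nu}\bigl[\lambda_1\norm[\radon]{u-u_0-\Div\vec\nu} + \lambda_2\norm[\radon]{\abs{\vec\nu}}\bigr]$, pairs $u-u_0-\Div\vec\nu$ against $\Div\vec\phi$ and (after an integration by parts) $\vec\nu$ against $\grad\Div\vec\phi$, and absorbs both error terms into the coefficients, obtaining the lower bound $\TV(u_0) + (\lambda_1-\lambda_1^*)\norm[\radon]{u-u_0-\Div\vec\nu} + (\lambda_2-\lambda_2^*)\norm[\radon]{\abs{\vec\nu}}$ with $\lambda_1^*=\norm[\infty]{\Div\vec\phi}$ and $\lambda_2^*=\norm[\infty]{\abs{\grad\Div\vec\phi}}$, which is exactly your $\Lip(\Div\vec\phi)$. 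Uniqueness then comes for free: the residual coefficients are strictly positive and the minimum over $\vec\nu$ is attained, so equality forces $\vec\nu=0$ and $u=u_0$. You instead test the primal supremum with the single admissible function $f=-\Div\vec\phi$, which proves minimality without Lemma~\ref{lem:cascading} (hence without the Fenchel--Rockafellar machinery behind it), and you recover uniqueness by the strict-interior scaling trick $(1+t)f$ together with the fact that $\norm[\KR,\lambda]{\cdot}$ separates points. Your route is more elementary and self-contained; the paper's route buys, beyond bare uniqueness, a quantitative coercivity estimate around $u_0$ (the excess objective dominates the residual Radon norms). Finally, your regularity caveat is well taken and applies equally to the paper: its threshold $\lambda_2^*=\norm[\infty]{\abs{\grad\Div\vec\phi}}$ is finite only if $\Div\vec\phi$ is Lipschitz, which the stated hypothesis $\vec\phi\in C^1$ with compact support does not guarantee; both proofs need the hypothesis strengthened (e.g.\ $\Div\vec\phi$ Lipschitz, or $\vec\phi\in C^2$) in exactly the same way, so this is a gap in the theorem statement rather than in your argument.
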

\begin{proof} For any $u\in BV$ we have
  \begin{align*}
    \norm[\KR,\lambda_1,\lambda_2]{u-u_0} + \TV(u) & \geq \int u\Div \vec\phi + \Big[\min_{\vec\nu} \lambda_1\norm[\radon]{u-u_0 - \Div\vec\nu} + \lambda_2\norm[\radon]{\abs{\vec\nu}}\Big]\\
    & = \int u_0\Div \phi + \min_ {\vec\nu}\Big[ \lambda_1\norm[\radon]{u-u_0 - \Div\vec\nu} + \lambda_2\norm[\radon]{\abs{\vec\nu}}\\
    & \qquad + \int (u-u_0 - \Div \vec\nu)\Div\vec\phi + \int\Div\vec\nu\Div\vec\phi\Big]\\
    & \geq \TV(u_0)  + \min_{\vec\nu} \Big[(\lambda_1 - \norm[\infty]{\Div\vec\phi})\norm[\radon]{u-u_0 - \Div\vec\nu} +\\
    & \qquad (\lambda_2 - \norm[\infty]{\abs{\grad\Div\vec\phi}})\norm[\radon]{\abs{\vec\nu}}\Big]
  \end{align*}
  Hence, the values $\lambda_1^* = \norm[\infty]{\Div\vec\phi}$ and
  $\lambda_2^* = \norm[\infty]{\abs{\grad\Div\vec\phi}}$ are valid thresholds as
  claimed.
\end{proof}

Likewise there are thresholds in the opposite direction,
again similarly to the $L^1$-$\TV$ case.

\begin{theorem}
    Let $\Omega \subset \R^n$ be a convex open domain with Lipschitz
    boundary. Then there exists a constant $C=C(\Omega)$ such that
    any solution $\bar u$ to \eqref{eq:krtv-denoising}
    is a constant whenever $1/C > \lambda_1$.
\end{theorem}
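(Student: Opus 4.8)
The plan is to show that once $\lambda_1$ falls below a threshold set by the Poincaré constant of $\Omega$, every non-constant competitor $u$ is strictly beaten by the constant function equal to its own average $\bar u \defeq \Lebesgue^n(\Omega)^{-1}\int_\Omega u \d \Lebesgue^n$. Since constants have vanishing total variation and finite discrepancy, this forces every minimizer of \eqref{eq:krtv-denoising} to be constant. I do not need existence of a minimizer: I only characterise any solution that does exist.

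First I would record two elementary facts about the discrepancy term. Since $\norm[\KR,\lambda]{\cdot}$ is the support function of a symmetric convex set of test functions, it is a seminorm and so obeys the triangle inequality; and by the first estimate of Lemma~\ref{lem:KRnorm-estimates} together with $\norm[\radon]{v}=\norm[L^1]{v}$ for $v\in L^1(\Omega)$, it satisfies $\norm[\KR,\lambda]{v}\le\lambda_1\norm[L^1]{v}$. Applying these with $v=\bar u - u$ (and using symmetry of the seminorm) gives
\[
\norm[\KR,\lambda]{\bar u-u^0}
\le
\norm[\KR,\lambda]{u-u^0}+\norm[\KR,\lambda]{\bar u-u}
\le
\norm[\KR,\lambda]{u-u^0}+\lambda_1\norm[L^1]{u-\bar u}.
\]
Note that only $\lambda_1$ enters, consistent with the statement placing no constraint on $\lambda_2$.

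Next I would invoke the Poincaré--Wirtinger inequality for $\BVspace$ functions on a bounded convex (hence Lipschitz) domain: there is a constant $C=C(\Omega)$ with $\norm[L^1]{u-\bar u}\le C\,\TV(u)$ for all $u\in\BVspace(\Omega)$. Substituting this into the previous display and adding $\TV(\bar u)=0$ on the left yields
\[
\norm[\KR,\lambda]{\bar u-u^0}+\TV(\bar u)
\le
\norm[\KR,\lambda]{u-u^0}+\TV(u)-(1-\lambda_1 C)\,\TV(u).
\]
If $\lambda_1<1/C$, i.e.\ $1/C>\lambda_1$, the correction $(1-\lambda_1 C)\TV(u)$ is strictly positive whenever $\TV(u)>0$, so the constant $\bar u$ strictly beats any non-constant $u$ of finite total variation; competitors with $\TV(u)=\infty$ are excluded outright since constants already give a finite objective (here $\norm[\KR,\lambda]{\bar u-u^0}\le\lambda_1\norm[L^1]{\bar u-u^0}<\infty$). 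Hence no non-constant function can be optimal, and any solution $\bar u$ must be constant, with $C$ the Poincaré constant.

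The only nontrivial ingredient is the $\BVspace$ Poincaré inequality with a constant depending only on $\Omega$, which is classical for convex and for Lipschitz domains; everything else is the triangle inequality for a seminorm and the already-proved upper bound of the $\KR$-norm by the Radon norm from Lemma~\ref{lem:KRnorm-estimates}. Thus the main (but standard) obstacle is simply to cite or re-derive this Poincaré estimate so that the threshold $1/C$ is genuinely an explicit geometric quantity attached to $\Omega$.
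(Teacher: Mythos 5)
Your proof is correct, and it hits the same threshold $1/C$ (the $\BVspace$ Poincar\'e constant of $\Omega$) via the same core device as the paper: comparing against a mean-value constant. The implementation, however, is genuinely different. The paper works through the saddle-point formulation \eqref{eq:krtv-saddle-1}: it takes $\tilde u \defeq \dashint_\Omega \bar u \d x$, picks maximizers $f$ of $H_\lambda(\bar u,\cdot)$ and $\tilde f$ of $H_\lambda(\tilde u,\cdot)$, uses optimality of $\bar u$ to get $H_\lambda(\tilde u,\tilde f)\ge H_\lambda(\bar u,f)$, and after cancelling the data terms is left with
\[
\TV(\bar u)\le \int_\Omega \tilde f(\tilde u-\bar u)\d x
\le \lambda_1\norm[L^1]{\tilde u-\bar u}\le \lambda_1 C\,\TV(\bar u).
\]
You stay entirely in the primal: the triangle inequality for the seminorm $\norm[\KR,\lambda]{\cdot}$ together with the bound $\norm[\KR,\lambda]{v}\le\lambda_1\norm[L^1]{v}$ from Lemma~\ref{lem:KRnorm-estimates} yields the same chain of inequalities without ever introducing dual variables; your step $\norm[\KR,\lambda]{\bar u-u}\le\lambda_1\norm[L^1]{\bar u-u}$ is exactly what the pointwise bound $\abs{\tilde f}\le\lambda_1$ accomplishes in the paper. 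Your route buys two things: it does not require the suprema defining $H_\lambda$ to be attained (the paper uses maximizers $f,\tilde f$ without comment, though attainment does hold by Arzel\`a--Ascoli on the bounded domain), and it proves slightly more, namely that \emph{every} non-constant competitor is strictly beaten by its own average, rather than deriving a contradiction only for an assumed minimizer. What the paper's route buys is uniformity with the rest of Section~\ref{sec:properties-kr-tv}, where the same saddle-point optimality structure is reused (e.g.\ in Theorem~\ref{thm:maxpincip}). One point to make explicit in your write-up: $\TV(u)=0$ implies $u$ is constant because the convex domain $\Omega$ is connected.
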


\begin{proof}
    Let $f$ maximize $H_\lambda(\bar u, \cdot)$. Define
    \[
        \tilde u(y) \defeq \dashint_\Omega \bar u(x) \d x.
    \]
    Let $\tilde f$ maximize $H_\lambda(\tilde u, \cdot)$. 
    Since $\bar u$ solves \eqref{eq:krtv-denoising}, 
    we have
    \[
        H_\lambda(\tilde u, \tilde f)
        \ge
        H_\lambda(\bar u, f).
    \]
    In other words, using $\TV(\tilde u)=0$, writing out $H_\lambda$,
    and rearranging terms
    \[
        \int_\Omega \tilde f(\bar u-u^0) \d x + \int_\Omega \tilde f(\tilde u-\bar u) \d x 
        \ge
        \int_\Omega f(\bar u-u^0) \d x + \TV(\bar u).
    \]
    But, by the choice of $f$, we have
    \[
        \int_\Omega \tilde f(\bar u-u^0) \d x 
        \le
        \int_\Omega f(\bar u-u^0) \d x.
    \]
    Therefore
    \[
        \TV(\bar u)
        \le
        \int_\Omega \tilde f(\tilde u-\bar u) \d x.
    \]
    An application of Poincar{\'e}'s inequality yields
    \[
        \TV(\bar u)
        \le
        \lambda_1 C \TV(\bar u).
    \]
    This is a contradiction unless $1 < \lambda_1 C$
    or $\TV(\bar u)=0$, i.e., $\bar u$ is a constant.
\end{proof}

The second of the above two theorems shows that for small 
$\lambda_1$ one recovers a constant solution. In fact, this has to be
$\dashint_\Omega u^0 \d x$.
The first of the above two theorems shows that for parameters $\lambda_1$ and
$\lambda_2$ large enough, one recovers the input $u^0$ from the
$\KR$-$\TV$ denoising problem. This behavior is similar to the
$L^1$-$\TV$ denoising problem. If one leaves the regime of exact
reconstruction one usually observes that for $L^1$-$\TV$ denoising
mass disappears and also the phenomenon of ``suddenly vanishing
sets'' (cf.~\cite{duval2009geometricl1tv}). In contrast, for the
$\KR$-$\TV$ denoising model, we have mass conservation of the
minimizer even in the range of parameters, where exact reconstruction
does not happen anymore and noise is being removed. The precise
statement is given in the next theorem:
\begin{theorem}[Mass preservation]\label{thm:mass-preservation}
  If $\frac{\lambda_2}{\lambda_1}\leq\frac{2}{\diam(\Omega)}$, then
  \[
  \min_u \norm[\KR,\lambda_1,\lambda_2]{u-u^0} + \TV(u)
  \]
  has a minimizer $\bar u$ such that $\int_\Omega \bar u(x)\d x =  \int_\Omega u^0(x)\d x$.
\end{theorem}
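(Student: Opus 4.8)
The plan is to take an arbitrary minimizer of \eqref{eq:krtv-denoising} and \emph{correct its mass by an additive constant}, showing this does not raise the objective. First I would record existence of a minimizer: the objective $J(u)\defeq\norm[\KR,\lambda_1,\lambda_2]{u-u^0}+\TV(u)$ is convex and lower semicontinuous, and coercivity in $\BVspace$ follows by combining $\TV(u)$ with the bound $\norm[\KR,\lambda_1,\lambda_2]{u-u^0}\ge\lambda_1\abs{\int_\Omega(u-u^0)\d\Lebesgue^n}$ (test \eqref{eq:KR-def} with the constant function $f\equiv\lambda_1\sign(\cdot)$) and a Poincaré inequality. Given a minimizer $\bar u$, set $\mu\defeq\bar u-u^0$, let $m\defeq\int_\Omega\mu\,\d\Lebesgue^n$, and consider the shifted candidates $\bar u_c\defeq\bar u+c$, $c\in\R$. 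As $\TV$ is invariant under additive constants, $J(\bar u_c)=\norm[\KR,\lambda_1,\lambda_2]{\mu+c}+\TV(\bar u)$, so everything reduces to the convex function $h(c)\defeq\norm[\KR,\lambda_1,\lambda_2]{\mu+c}$.

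Since every $\bar u_c$ is a competitor and $\bar u$ is optimal, $h$ attains its minimum at $c=0$. The goal is to show that the mean-zero shift $c^*\defeq-m/\Lebesgue^n(\Omega)$, for which $\mu+c^*$ has vanishing mass, also lies in the minimizing set of $h$; then $\bar u_{c^*}$ is a minimizer with $\int_\Omega\bar u_{c^*}\,\d\Lebesgue^n=\int_\Omega u^0\,\d\Lebesgue^n$, as claimed. Because $h(0)\le h(c^*)$ by minimality, it remains only to prove the reverse inequality $h(c^*)\le h(0)$, that is $\norm[\KR,\lambda_1,\lambda_2]{\hat\mu}\le\norm[\KR,\lambda_1,\lambda_2]{\mu}$, where $\hat\mu\defeq\mu+c^*$ is the mean-zero representative of the residual.

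Here I would use the dual definition \eqref{eq:KR-def} together with the hypothesis. Because $\hat\mu$ has zero mass, $\int_\Omega f\,\d\hat\mu$ is unchanged when $f$ is modified by a constant, so I am free to re-center any optimal test function. The hypothesis $\lambda_2/\lambda_1\le 2/\diam(\Omega)$ is exactly $\lambda_2\diam(\Omega)\le 2\lambda_1$, and since any admissible $f$ has $\Lip(f)\le\lambda_2$, its oscillation is at most $\lambda_2\diam(\Omega)\le 2\lambda_1$; this leaves room to shift $f$ so that it stays inside $\{\,\abs{f}\le\lambda_1\,\}$ while its spatial average $\int_\Omega f\,\d\Lebesgue^n$ takes the sign of $m$ (opposite to $c^*$). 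Using such a re-centered optimal test function $f$ for $\hat\mu$ as a feasible competitor for $\mu=\hat\mu-c^*$ yields
\[
\norm[\KR,\lambda_1,\lambda_2]{\mu}\ge\int_\Omega f\,\d\mu=\int_\Omega f\,\d\hat\mu-c^*\int_\Omega f\,\d\Lebesgue^n\ge\norm[\KR,\lambda_1,\lambda_2]{\hat\mu},
\]
which is the desired inequality.

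The main obstacle is precisely this re-centering step: one must guarantee that an optimal test function for the mean-zero measure $\hat\mu$ can be slid, inside the box constraint $\abs{f}\le\lambda_1$, to a position where its spatial average has the correct sign. This is where the quantitative content of Lemma~\ref{lem:KRnorm-estimates} enters, via the oscillation bound $\lambda_2\diam(\Omega)\le 2\lambda_1$: it ensures that the constraint $\abs{f}\le\lambda_1$ never rigidly pins the mean of $f$ — \emph{except} in the borderline case $\lambda_2/\lambda_1=2/\diam(\Omega)$, where the oscillation may equal $2\lambda_1$ and no shifting room is left. That critical case I would treat separately, either by exhibiting a zero-spatial-mean (e.g.\ affine) optimal test function when one is available, or by approximating with strictly subcritical ratios $\lambda_2/\lambda_1$ and passing to the limit in the corresponding minimizers.
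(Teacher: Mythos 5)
Your reduction to the scalar function $h(c)\defeq\norm[\KR,\lambda]{\mu+c}$, and the observation that everything hinges on proving $h(c^*)\le h(0)$, are fine; the genuine gap is the re-centering step, and it is \emph{not} confined to the borderline case $\lambda_2/\lambda_1=2/\diam(\Omega)$. For a feasible $f$ the admissible constant shifts $t$ form the interval $[-\lambda_1-\min f,\ \lambda_1-\max f]$, so the largest spatial mean you can reach by sliding $f$ upward is $\dashint_\Omega f\d x+(\lambda_1-\max f)$; making the mean nonnegative therefore requires $\max f-\dashint_\Omega f\d x\le\lambda_1$ (and symmetrically for nonpositive). The hypothesis only controls the oscillation, $\max f-\min f\le\lambda_2\diam(\Omega)\le 2\lambda_1$, and the mean of $f$ can sit anywhere in $[\min f,\max f]$, so $\max f-\dashint_\Omega f\d x$ can be as large as $2\lambda_1$: having \emph{some} slack in the box constraint is not the same as having \emph{enough} slack to push the mean across zero. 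As written, your argument closes only under the stronger hypothesis $\lambda_2\diam(\Omega)\le\lambda_1$ (then the whole range of $f$ fits inside $[0,\lambda_1]$ or $[-\lambda_1,0]$ after shifting), i.e.\ on half of the claimed parameter range; and approximating the critical ratio by subcritical ones cannot help, because the obstruction is present for every ratio with $\lambda_2\diam(\Omega)>\lambda_1$.

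Worse, the inequality you are after, $\norm[\KR,\lambda]{\hat\mu}\le\norm[\KR,\lambda]{\hat\mu-c^*}$, is simply false for general measures, so no cleverer choice of optimal test function can repair the step: any repair must use the minimality of $\bar u$ in an essential way, which your argument never does. Example: let $\Omega$ be the open unit disk, $\lambda_1=\lambda_2=1$ (so $\lambda_2/\lambda_1=2/\diam(\Omega)$), $p=(1,0)$, and $\hat\mu=\delta_p-\sigma$ with $\sigma$ the uniform unit-mass measure on the far half $\set{(\cos\theta,\sin\theta)}{\theta\in[\pi/2,3\pi/2]}$ of the boundary circle. For feasible $f$ one has $\int f\d\hat\mu=f(p)-\int f\d\sigma\le\int\abs{x-p}\d\sigma(x)$, with equality forcing $f(p)-f(x)=\abs{x-p}$ on $\support\sigma$; since $-p\in\support\sigma$ with $\abs{p-(-p)}=2=2\lambda_1$, the box constraint pins $f(p)=1$, and then $f(x)=1-\abs{x-p}$ on every segment from $p$ to $\support\sigma$. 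These segments cover all of $\Omega$ except the two lunes cut off by the chords $\ell_\pm$ from $p$ to $(0,\pm1)$. Now $\int_\Omega(1-\abs{x-p})\d x=\pi-\tfrac{32}{9}\approx-0.41$, while on the lunes every feasible completion satisfies $f(x)-(1-\abs{x-p})\le 2\,\mathrm{dist}(x,\ell_\pm)$, and the resulting total correction is at most $4\int_{\mathrm{lune}}\mathrm{dist}(x,\ell_+)\d x\approx 0.14$. Hence \emph{every} optimal test function for $\hat\mu$ has $\int_\Omega f\d x\le-0.27<0$, and since the feasible set is compact in $C(\closure\Omega)$, the convex function $c\mapsto\norm[\KR,\lambda]{\hat\mu+c}$ has right derivative at most $-0.27$ at $c=0$: shifting the mean-zero measure $\hat\mu$ by a small positive constant \emph{strictly decreases} its $\KR$-norm (and the example survives small subcritical perturbations of $\lambda_1$). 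This is exactly why the paper's proof does not shift the primal variable of an arbitrary minimizer: it instead solves the $\lambda_1=\infty$ problem, whose minimizer preserves mass automatically, and shifts the \emph{dual} variable $\bar f$ into the box. There only the range of $\bar f$ must be controlled --- precisely what the oscillation bound delivers --- because the shift constant is paired with $\int_\Omega(\bar u-u^0)\d x=0$, so the mean of $\bar f$ never enters.
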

\begin{proof}
  The idea is, to prove that a minimizer of the $\KR$-$\TV$ denoising
  problem with $\lambda_1=\infty$ is also a minimizer of the problem with
  finite but large enough $\lambda_1$. Hence we start by denoting with $(\bar u,\bar
  f)$ a solution of the following saddle-point problem:
  \begin{equation}
    \label{eq:saddle_lambda1_inf}
    \min_u\max_{\abs{\grad f}\leq\lambda_2} \int f (u-u^0)\d x + \TV(u)
  \end{equation}
      With the notation \eqref{eq:def-H},
  \eqref{eq:saddle_lambda1_inf} reads as $\min_u\max_f
  H_{\infty,\lambda_2}(u,f)$.
  
  It holds that $\int_\Omega\bar u\d x = \int_\Omega u^0\d x$, because otherwise, the
  $\max$ would be $\infty$. In other words: with $\lambda_1=\infty$ we
  have mass preservation.
  
  Now let $\frac{\lambda_2}{\lambda_1}\leq \frac2{\diam(\Omega)}$. We
  aim to show that there is constant $c$ such that $(\bar u,\bar f+c)$
  is a solution of
  \begin{equation}
    \label{eq:saddle}
    \min_u\max_{
      \begin{array}{c}
        \abs{f}\leq\lambda_1,\\
        \abs{\grad f}\leq\lambda_2
      \end{array}
    } \int f(u-u^0)\d x + \TV(u).
  \end{equation}
  Since $\bar f$ is Lipschitz with constant $\lambda_2$, we get that
  $\bar f(x) - \bar f(x)\leq \lambda_2\abs{x-y}$, and hence, $\max \bar
  f - \min \bar f\leq \lambda_2\diam(\Omega)$. Consequently, there is a
  constant $c$ such that
  \[
  \abs{\bar f + c}\leq \lambda_2\frac{\diam(\Omega)}{2} \leq\lambda_1
  \]
  in other words: $\bar f + c$ is feasible for~\eqref{eq:saddle}. Since
  $\int\bar u = \int u^0$ we also have
  \[
  H_{\lambda_1,\lambda_2}(\bar u,\bar f+c) = \int \bar f(\bar u - u^0)\d x +
  c\underbrace{\int(\bar u - u^0)\d x}_{=0} + TV(u) =
  H_{\infty,\lambda_2}(\bar u,\bar f).
  \]
  Since all $f$ that are feasible for~\eqref{eq:saddle} are also
  feasible for~\eqref{eq:saddle_lambda1_inf}, we have for all these $f$
  that
  \begin{equation}
    \label{eq:saddle_upper}
    H_{\lambda_1,\lambda_2}(\bar u,f) 
    \le
    H_{\infty,\lambda_2}(\bar u,f) 
    \le
    H_{\infty,\lambda_2}(\bar u,\bar f) = H_{\lambda_1,\lambda_2}(\bar u,\bar f+c).
  \end{equation}
  Also we have by $(\bar u, \bar f)$ being a saddle-point
  for all $u$ that
  \[
  H_{\lambda_1,\lambda_2}(\bar u,\bar f + c) =
  H_{\infty,\lambda_2}(\bar u,\bar f) \leq H_{\infty,\lambda_2}(u,\bar
  f).
  \]
  But since $\TV(u) = \TV(u+d)$ for every constant $d$ we also have
  with $d = c \int(u-u^0)\d x/\int \bar f\d x$ that
  \begin{equation}
    \label{eq:saddle_lower}
    \begin{split}
      H_{\lambda_1,\lambda_2}(\bar u,\bar f + c) \leq
      H_{\infty,\lambda_2}(u + d,\bar f) = \int \bar f(u-u^0)\d x +
      d\int\bar f\d x + \TV(u)\\ = \int (\bar f+c)(u-u^0)\d x + \TV(u) =
      H_{\lambda_1,\lambda_2}(u,\bar f + c).
    \end{split}
  \end{equation}
  Together, \eqref{eq:saddle_upper} and~\eqref{eq:saddle_lower} show
  that for all $f$ and $u$ it holds that
  \[
  H_{\lambda_1,\lambda_2}(\bar u,f) \leq H_{\lambda_1,\lambda2}(\bar
  u,\bar f +c) \leq H_{\lambda_1,\lambda_2}(u,\bar f + c)
  \]
  and this shows that $(\bar u,\bar f + c)$ is a solution
  of~\eqref{eq:saddle}.
\end{proof}

Note that the above theorem remains valid if we replace the $\TV$
penalty by any other penalty that is invariant under addition of
constants such as Sobolev semi-norms.

We state a lemma on the subdifferential of the total variation of the
positive and negative part of a function which we use in the following
theorem.
\begin{lemma}
    \label{lemma:tv-subdiff-inclusion}
    Let $u \in \BVspace(\Omega)$.
    Then $\partial \TV(u) \subset \partial \TV(u^+)$
    and $\partial \TV(u) \subset \partial \TV(u^-)$.
\end{lemma}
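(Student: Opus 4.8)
The plan is to exploit the positive one-homogeneity of $\TV$, which reduces the whole statement to a single scalar equality forced by a sandwiching argument. Recall that $\TV$ is convex, lower semicontinuous and positively one-homogeneous, so a subgradient $p \in \partial\TV(u)$ is characterised by the two conditions
\begin{equation*}
  \scp{p}{u} = \TV(u)
  \qquad\text{and}\qquad
  \scp{p}{v} \le \TV(v) \quad\text{for all } v \in \BVspace(\Omega).
\end{equation*}
(The second condition says exactly that $p \in \partial\TV(0)$, i.e.\ $\TV^*(p)=0$.) The crucial point is that this second condition does not refer to the base point at all; it is shared by every subgradient of $\TV$ at every point. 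Consequently, writing $u^+ = \max\{u,0\}$ and $u^- = \min\{u,0\}$ so that $u = u^+ + u^-$, to prove $p \in \partial\TV(u^+)$ and $p \in \partial\TV(u^-)$ it suffices to verify the two equalities $\scp{p}{u^+} = \TV(u^+)$ and $\scp{p}{u^-} = \TV(u^-)$; the inequality half is automatic.

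The analytic input I would use is the additivity
\begin{equation*}
  \TV(u) = \TV(u^+) + \TV(u^-).
\end{equation*}
This I would establish via the coarea formula $\TV(w) = \int_{\R} \mathrm{Per}(\{w > s\};\Omega)\d s$: for $s \ge 0$ one has $\{u^+ > s\} = \{u>s\}$ while $\{u^- > s\} = \emptyset$, and for $s < 0$ one has $\{u^+ > s\} = \Omega$ (a set of vanishing relative perimeter) while $\{u^- > s\} = \{u>s\}$. Splitting the coarea integral for $u$ at $s=0$ then yields the additivity directly, and in particular $\TV(u^\pm) \le \TV(u) < \infty$, so $u^\pm \in \dom\TV$ and the pairings below are finite.

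With these two ingredients I would conclude as follows. Using $u = u^+ + u^-$ together with the defining equality $\scp{p}{u} = \TV(u)$ and the additivity,
\begin{equation*}
  \scp{p}{u^+} + \scp{p}{u^-} = \TV(u) = \TV(u^+) + \TV(u^-).
\end{equation*}
On the other hand the shared inequality condition, applied to $v = u^+$ and to $v = u^-$, gives $\scp{p}{u^+} \le \TV(u^+)$ and $\scp{p}{u^-} \le \TV(u^-)$. Since the two sides of the displayed identity are equal, neither of these inequalities can be strict, so both must be equalities, which is precisely what the two inclusions require.

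The main obstacle is the additivity $\TV(u) = \TV(u^+) + \TV(u^-)$: everything else is soft convex analysis, but this identity carries the geometric content and must be argued carefully, including the level-set bookkeeping and the fact that the level $\{u=0\}$ contributes nothing (so that $\{u>s\}$ and $\{u\ge s\}$ have equal perimeter for a.e.\ $s$). One should also confirm that the one-homogeneity characterisation of the subdifferential is valid in the duality pairing actually used in the imaging setting (e.g.\ the $L^2$ pairing), which is where lower semicontinuity of $\TV$ enters.
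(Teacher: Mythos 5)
Your proof is correct and rests on exactly the same two ingredients as the paper's own argument: the positive one-homogeneity of $\TV$ (giving the equality $\scp{p}{u}=\TV(u)$ and the base-point-free bound $\scp{p}{v}\le\TV(v)$, which the paper phrases via the test points $v=0$, $v=2u$ and the fact $-L\in\partial\TV(-u)$) and the additivity $\TV(u)=\TV(u^+)+\TV(u^-)$. The only substantive differences are presentational and in your favour: you actually prove the additivity via the coarea formula, whereas the paper invokes it without proof, and your sandwich argument yields both inclusions simultaneously where the paper rearranges the subgradient inequality to handle $u^+$ and appeals to symmetry for $u^-$.
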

\begin{proof}
    It suffices to prove the inclusion $\partial \TV(u) \subset \partial \TV(u^+)$,
    the other inclusion being completely analogous.
    We begin by observing that if $L \in \partial \TV(u)$, 
    as a linear functional $L^* \in [\BVspace(\Omega)]^*$, then
    \[
        \TV(u)=L(u).
    \]
    This follows from applying the definition of the subdifferential
    \begin{equation}
        \label{eq:tv-subdiff-tv}
        \TV(v) - \TV(u) \ge L(v-u), 
        \quad \text{for all } v \in \BVspace(\Omega),
    \end{equation}
    to both $v=0$ and $v=2u$. If we now apply the definition to $v=u^-$,
    and also use the fact that $-L \in \partial \TV(-u)$, we deduce
    \begin{equation}
        \label{eq:subdiff-uminus-lowerbound}
        \TV(u^-) \ge \abs{L(u^-)}.
    \end{equation}
    Using $\TV(u)=\TV(u^+)+\TV(u^-)$ to rearrange \eqref{eq:tv-subdiff-tv}, 
    we have
    \begin{equation}
        \notag
        \TV(v) - \TV(u^+) \ge L(v-u^+) + \bigl(\TV(u^-)+L(u^-)\bigr), 
        \quad \text{for all } v \in \BVspace(\Omega).
    \end{equation}
    Referring to \eqref{eq:subdiff-uminus-lowerbound} we deduce
    $L \in \partial \TV(u^+)$.
\end{proof}

\begin{theorem}[Weak maximum principle]
  \label{thm:maxpincip}
  Let $u^0\geq 0$. Then there exists a minimizer $\bar u$
  of~\eqref{eq:krtv-denoising} that also fulfills $\bar u\geq 0$.
\end{theorem}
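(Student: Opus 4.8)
The plan is to start from a saddle point $(\bar u,\bar f)$ of the primal formulation \eqref{eq:krtv-saddle-1}, so that $\bar f$ maximises $f\mapsto\int_\Omega f(\bar u-u^0)\d x$ over the feasible set $K=\set{f}{\abs{f}\le\lambda_1,\ \abs{\grad f}\le\lambda_2}$ and $\bar u$ minimises $u\mapsto H_\lambda(u,\bar f)$; such a point exists by the same primal--dual considerations used for the numerical scheme. Writing $\bar u^+=\max(\bar u,0)\ge 0$, I would show that $(\bar u^+,\bar f)$ is again a saddle point, which immediately makes $\bar u^+$ a nonnegative minimiser of \eqref{eq:krtv-denoising}. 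Because the fidelity term $\int_\Omega f(u-u^0)\d x$ is linear in $u$ for fixed $f$, optimality of $\bar u$ in the outer problem is exactly $-\bar f\in\partial\TV(\bar u)$, and the argument splits into a ``$\TV$ half'' (keeping $\bar f$ fixed and replacing $\bar u$ by $\bar u^+$) and a ``$\KR$ half'' (checking that $\bar f$ is still an optimal test function).

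For the $\TV$ half I invoke Lemma~\ref{lemma:tv-subdiff-inclusion}: from $-\bar f\in\partial\TV(\bar u)$ it gives $-\bar f\in\partial\TV(\bar u^+)$, which says precisely that $\bar u^+$ still minimises $u\mapsto H_\lambda(u,\bar f)$. Applying the $1$-homogeneity identity $L(w)=\TV(w)$ for $L\in\partial\TV(w)$ (established inside that lemma) to $w=\bar u$ and to $w=\bar u^+$ gives $\int_\Omega\bar f\,\bar u\d x=-\TV(\bar u)$ and $\int_\Omega\bar f\,\bar u^+\d x=-\TV(\bar u^+)$; subtracting and using the coarea identity $\TV(\bar u)=\TV(\bar u^+)+\TV(\bar u^-)$ yields the key relation $\int_\Omega\bar f\,\bar u^-\d x=\TV(\bar u^-)$. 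Since $\bar u^+=\bar u+\bar u^-$ and $\int_\Omega\bar f(\bar u-u^0)\d x=\norm[\KR,\lambda]{\bar u-u^0}$, this produces
\[
\int_\Omega\bar f(\bar u^+-u^0)\d x=\norm[\KR,\lambda]{\bar u-u^0}+\TV(\bar u^-).
\]
As $\bar f\in K$ is feasible, the left-hand side is a lower bound for $\norm[\KR,\lambda]{\bar u^+-u^0}$, so the whole theorem comes down to the matching upper bound $\norm[\KR,\lambda]{\bar u^+-u^0}\le\norm[\KR,\lambda]{\bar u-u^0}+\TV(\bar u^-)$, i.e. that $\bar f$ attains the supremum defining $\norm[\KR,\lambda]{\bar u^+-u^0}$.

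This $\KR$ half is the main obstacle. Taking an optimal $g\in K$ for $\bar u^+-u^0$ and using $\bar u^+-u^0=(\bar u-u^0)+\bar u^-$ together with $\int_\Omega g(\bar u-u^0)\d x\le\norm[\KR,\lambda]{\bar u-u^0}$, the required upper bound reduces to the single inequality $\int_\Omega g\,\bar u^-\d x\le\TV(\bar u^-)$, where $\bar u^-\ge 0$ is supported on $\Omega_-=\{\bar u<0\}$ (on which $\bar u^+-u^0=-u^0\le 0$). The crude estimates are too lossy here: the triangle inequality together with Lemma~\ref{lem:KRnorm-estimates} only gives $\int_\Omega g\,\bar u^-\d x\le\lambda_1\norm[L^1]{\bar u^-}$, which need not lie below $\TV(\bar u^-)$, so one must use the optimisation genuinely. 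The structural fact I would exploit is that $K$ is a lattice: the pointwise minimum and maximum of two functions obeying $\abs{f}\le\lambda_1$ and $\abs{\grad f}\le\lambda_2$ obey the same bounds. Using this, together with the complementary-slackness/transport characterisation of the optimal Kantorovich potential, I would argue that $g$ can be chosen with $g\le\bar f$ on $\Omega_-$; since $\bar u^-\ge 0$ is supported in $\Omega_-$ and $\int_\Omega\bar f\,\bar u^-\d x=\TV(\bar u^-)$, this gives $\int_\Omega g\,\bar u^-\d x\le\int_\Omega\bar f\,\bar u^-\d x=\TV(\bar u^-)$.

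Combining the two halves shows $(\bar u^+,\bar f)$ is a saddle point, hence $\bar u^+$ is a minimiser of \eqref{eq:krtv-denoising} with $\bar u^+\ge 0$. I expect the lattice/complementary-slackness step to be the delicate point, precisely because the Lipschitz constraint in $K$ is nonlocal and rules out the naive ``glue $g$ to $\bar f$ on $\Omega_-$'' construction; the corresponding statement for nonpositive data follows by applying the result to $-u^0$.
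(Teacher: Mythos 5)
Your proposal follows the same skeleton as the paper's proof: pass to $\bar u^+$, use Lemma~\ref{lemma:tv-subdiff-inclusion} to preserve the primal optimality condition $-\bar f\in\partial\TV(\bar u^+)$, and then verify that $\bar f$ remains a maximiser of $f\mapsto\int_\Omega f(\bar u^+-u^0)\d x$ over $K$. Your algebra in the reduction is also correct: the identities $\int_\Omega\bar f\,\bar u^-\d x=\TV(\bar u^-)$ and $\TV(\bar u)=\TV(\bar u^+)+\TV(\bar u^-)$ do reduce everything to the inequality $\norm[\KR,\lambda]{\bar u^+-u^0}\le\norm[\KR,\lambda]{\bar u-u^0}+\TV(\bar u^-)$, which is indeed equivalent to $\bar u^+$ being a minimiser. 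The problem is that this inequality --- your ``KR half'' --- is exactly where the mathematical content of the theorem sits, and you do not prove it: the claim that an optimal $g$ for $\bar u^+-u^0$ can be chosen with $g\le\bar f$ on $\Omega_-\defeq\{\bar u<0\}$ is only asserted, with a pointer to ``lattice plus complementary slackness''.

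Moreover, the lattice swap you have in mind fails on its own, by a circularity you can make explicit. Put $g'\defeq\min(g,\bar f)\in K$ and test the optimality of $\bar f$ for $\bar u-u^0$ against $\max(g,\bar f)\in K$; this gives $\int_\Omega(g-\bar f)^+(\bar u-u^0)\d x\le 0$, and hence
\[
\int_\Omega g'(\bar u^+-u^0)\d x
\;\ge\;
\int_\Omega g(\bar u^+-u^0)\d x-\int_\Omega(g-\bar f)^+\,\bar u^-\d x .
\]
The leftover term $\int_\Omega(g-\bar f)^+\,\bar u^-\d x$ is nonnegative, and to discard it you need $g\le\bar f$ a.e.\ on $\support\bar u^-$ --- precisely the claim the swap was supposed to deliver. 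So the lattice structure controls the $(\bar u-u^0)$-part of the pairing but leaves the $\bar u^-$-part untouched; some genuinely pointwise information about $\bar f$ on $\{\bar u<0\}$ is indispensable. This is what the paper supplies instead: it works directly with the dual optimality condition $\bar u-u_0\in N_{C_1}(\bar f)+N_{C_2}(\bar f)$ of \eqref{eq:krtv-saddle-oc-2} and argues that at $\Lebesgue^n$-a.e.\ point where $\bar u<0$ (so $\bar u-u^0<0$) one of the two constraints is active ``in the downward direction'', so that the right-hand side at such points is the full half-line $(-\infty,0]$; since $\bar u^+-u^0=-u^0\le 0$ there and $\bar u^+-u^0=\bar u-u^0$ elsewhere, the same condition holds with $\bar u^+$ in place of $\bar u$, and no modified test function $g$ is ever needed. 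To complete your proof you would need an activity argument of this kind on $\{\bar u<0\}$, not the lattice swap.
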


\begin{proof}
    Writing the necessary and sufficient optimality 
    conditions for the saddle point formulation \eqref{eq:krtv-saddle-1}
    of \eqref{eq:krtv-denoising}, we have
    \cite[Theorem 4.1 \& Proposition 3.2, Chapter III]{ekeland1999convex}
    \begin{align}
        \label{eq:krtv-saddle-oc-1}
        0 & \in f + \partial \TV(\bar u), \quad \text{and} \\
        \label{eq:krtv-saddle-oc-2}
        \bar u - u_0 & \in N_{C_1}(f) + N_{C_2}(f),
    \end{align}
    where the constraint sets are
    \begin{align}
        C_1 & \defeq \{ f \in \Lip(\Omega) \mid -\lambda_1 \le f(x) \le \lambda_1 \text{ for all } x \in \Omega \},
        \quad \text{and}
        \\
        C_2 & \defeq \{ f \in \Lip(\Omega) \mid \norm{\grad f(x)} \le \lambda_2 \text{ for all } x \in \Omega \}.
    \end{align}
    Application of Lemma \ref{lemma:tv-subdiff-inclusion} shows that
    \begin{equation}
        \label{eq:krtv-saddle-oc-1plus}
        0 \in f + \partial \TV(\bar u^+),
    \end{equation}
    so
    that the first condition \eqref{eq:krtv-saddle-oc-1} is
    satisfied by $\bar u^+$ as well. Let us show that also \eqref{eq:krtv-saddle-oc-2}
    is satisfied by $\bar u^+$. To begin with we observe that at
    $\Lebesgue^n$-a.e.~point $x$ with $\bar u(x)<0$, either $C_1$ or $C_2$ 
    is active. Indeed, since $\bar u(x)-u_0(x)<0$ at such point, in the problem
    \[
        \max_{f \in C_1 \isect C_2} \int_{\Omega} f(\bar u-u_0) \d x,
    \]
    the solution $f$ should be as negative as possible within 
    the constraints. If it is as negative as possible, $C_1$ is active,
    and
    \[  
        [N_{C_1}(f)](x)=(-\infty, 0].
    \]
    Otherwise, $C_2$ has to be active, with $f$ going as fast as possible
    to the least possible value it can achieve. In this case,
    \[  
        [N_{C_2}(f)](x)=[0, \infty) \sign[-\Div \grad f(x)].
    \]
    If $C_1$ is not active, this has to be
    \[
        [N_{C_2}(f)](x)=(-\infty, 0],
    \]
    for $\bar u$ to satisfy \eqref{eq:krtv-saddle-oc-2}.
    In either case, the right hand side of \eqref{eq:krtv-saddle-oc-2}
    is $(-\infty, 0]$. Therefore, trivially
    \begin{equation}
        \label{eq:krtv-saddle-oc-2plus}
        \bar u^+ - u_0 \in N_{C_1}(f) + N_{C_2}(f)
        =(-\infty, 0].
    \end{equation}
    Combined, \eqref{eq:krtv-saddle-oc-1plus} and \eqref{eq:krtv-saddle-oc-2plus}
    show that $\bar u^+$ is a solution to \eqref{eq:krtv-saddle-1}.
\end{proof}

\begin{corollary}[Weak boundedness]
  Let $u^0 \in L^\infty(\Omega)$. 
  Then there exists a solution $\bar u$ 
  of~\eqref{eq:krtv-denoising} fulfilling
  $\norm[L^\infty(\Omega)]{\bar u} \le \norm[L^\infty(\Omega)]{u^0}$.
\end{corollary}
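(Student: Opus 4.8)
The plan is to reduce the corollary to the weak maximum principle (Theorem~\ref{thm:maxpincip}) by exploiting two elementary symmetries of the objective $J(u) \defeq \norm[\KR,\lambda]{u-u^0}+\TV(u)$. Set $M \defeq \norm[L^\infty(\Omega)]{u^0}$, so that $-M \le u^0 \le M$ holds $\Lebesgue^n$-a.e. First I would record that, for any constant $c$, the substitution $u = w+c$ turns the problem with data $u^0$ into the problem with data $u^0+c$: since $\TV$ is invariant under the addition of constants and the $\KR$-norm depends only on the difference $u-u^0$, we have $J(w+c) = \norm[\KR,\lambda]{w-u^0}+\TV(w)$, so $\bar u$ minimizes $J$ if and only if $\bar u + c$ minimizes the data-$(u^0+c)$ problem. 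Second, the substitution $u = -w$ turns the problem with data $u^0$ into the one with data $-u^0$: because both $\norm[\KR,\lambda]{\cdot}$ and $\TV$ are even (the constraint set defining the $\KR$-norm is symmetric under $f\mapsto -f$), $\bar u$ minimizes $J$ if and only if $-\bar u$ minimizes the data-$(-u^0)$ problem.

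With these symmetries in hand the argument becomes a two-sided truncation. Start from any minimizer $\bar u$ of $J$. For the lower bound, note $u^0+M\ge 0$, so by the first symmetry $\bar u + M$ minimizes the nonnegative-data problem with data $u^0+M$; the proof of Theorem~\ref{thm:maxpincip} shows that the positive part of \emph{any} minimizer is again a minimizer, hence $(\bar u+M)^+ = \max\{\bar u,-M\}+M$ is a minimizer of that problem, and translating back, $\bar u_1 \defeq \max\{\bar u,-M\}$ is a minimizer of $J$ with $\bar u_1 \ge -M$. For the upper bound I would apply the same device to $\bar u_1$ after reflecting: by the second symmetry $-\bar u_1$ minimizes the data-$(-u^0)$ problem, and since $-u^0+M\ge 0$, the first symmetry and the maximum principle give that $(-\bar u_1+M)^+ - M = \max\{-\bar u_1,-M\} = -\min\{\bar u_1,M\}$ is a minimizer of the data-$(-u^0)$ problem; reflecting once more, $\bar u_2 \defeq \min\{\bar u_1,M\} = \min\{M,\max\{-M,\bar u\}\}$ is a minimizer of $J$. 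Crucially the second (upper) truncation preserves the lower bound, since $\min\{\bar u_1,M\}\ge -M$ wherever $\bar u_1 \ge -M$ and $M \ge -M$; hence $-M \le \bar u_2 \le M$, i.e. $\norm[L^\infty(\Omega)]{\bar u_2} \le M = \norm[L^\infty(\Omega)]{u^0}$.

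I expect the argument to be essentially formal once the maximum principle is in hand; the points requiring care are verifying the two invariances (which rest on $\TV(u+c)=\TV(u)$ and on both terms being even) and checking that the upper truncation does not undo the lower bound — both straightforward. The one genuinely needed feature of Theorem~\ref{thm:maxpincip} is the stronger statement that every minimizer has its positive part as a minimizer, rather than mere existence of a nonnegative minimizer; this is exactly what the cited proof establishes, so no additional work is required there.
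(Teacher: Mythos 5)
Your proof is correct and follows essentially the same route as the paper: affine invariance of the problem combined with the weak maximum principle (Theorem~\ref{thm:maxpincip}) applied to the shifted data $u^0+M$ and $-u^0+M$. In fact, your write-up is more careful than the paper's two-line proof, since you make explicit the point it glosses over --- that one needs the stronger fact established in the \emph{proof} of Theorem~\ref{thm:maxpincip} (the positive part of \emph{any} minimizer is again a minimizer, not mere existence of a nonnegative one) and must chain the two truncations so that the upper truncation does not destroy the lower bound.
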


\begin{proof}
    The problem \eqref{eq:krtv-denoising} is affine-invariant,
    i.e., for data $a u^0+c$ for any constants $a, c \in \R$
    we have $a \bar u + c$ as a solution. Setting $M \defeq \norm[L^\infty(\Omega)]{u^0}$
    and applying Theorem \ref{thm:maxpincip} to data
    $u^0 + M$ and $-u^0 + M$ proves the claim.
\end{proof}

\begin{corollary}[No negative solution if mass is preserved]
  If $u^0\geq 0$ and $\tfrac{\lambda_2}{\lambda_1}\leq
  \tfrac{2}{\diam(\Omega)}$ then any minimizer
  of~\eqref{eq:krtv-denoising} is non-negative.
\end{corollary}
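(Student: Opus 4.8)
The plan is to show that for \emph{any} minimizer $\bar u$ of \eqref{eq:krtv-denoising} the negative part $\bar u^-$ vanishes, and to do so by combining the two preceding theorems. I would use two facts. First, the proof of the weak maximum principle (Theorem~\ref{thm:maxpincip}) is essentially local and constructive: given $u^0\ge0$, it verifies that whenever $\bar u$ is a minimizer its positive part $\bar u^+$ satisfies both optimality conditions \eqref{eq:krtv-saddle-oc-1} and \eqref{eq:krtv-saddle-oc-2}, so $\bar u^+$ is again a minimizer of \eqref{eq:krtv-denoising}. Second, in the regime $\lambda_2/\lambda_1\le 2/\diam(\Omega)$ the minimum of \eqref{eq:krtv-denoising} coincides with that of the $\lambda_1=\infty$ problem \eqref{eq:saddle_lambda1_inf}, whose inner supremum diverges unless mass is conserved (cf.\ the proof of Theorem~\ref{thm:mass-preservation}).

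Granting that \emph{every} minimizer in this regime conserves mass, the corollary is immediate: applying this to both $\bar u$ and $\bar u^+$ gives $\int_\Omega\bar u=\int_\Omega u^0=\int_\Omega\bar u^+$, so $\int_\Omega\bar u^-=\int_\Omega(\bar u^+-\bar u)=0$; since $\bar u^-\ge0$ this forces $\bar u^-=0$, i.e.\ $\bar u\ge0$.

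The main obstacle is exactly the emphasized step, since Theorem~\ref{thm:mass-preservation} is phrased as existence of \emph{a} mass-conserving minimizer, whereas I need conservation for the given $\bar u$ and for $\bar u^+$. I would close this by upgrading the argument of Theorem~\ref{thm:mass-preservation} using the first-order conditions: for a saddle point $(\bar u,f)$ of \eqref{eq:krtv-saddle-1}, the primal condition $-f\in\partial\TV(\bar u)$ together with the constant-invariance of $\TV$ forces $\int_\Omega f=0$. If $\int_\Omega(\bar u-u^0)=m\neq0$, then maximality of $f$ in the dual variable pins $f$ to the boundary $\max f=\lambda_1$ (or $\min f=-\lambda_1$); the regime bound $\lambda_2\diam(\Omega)/2\le\lambda_1$, which caps the oscillation of any $\lambda_2$-Lipschitz test function by $2\lambda_1$ (cf.\ Lemma~\ref{lem:KRnorm-estimates}), is the ingredient that must be exploited to contradict optimality and conclude $m=0$.

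As a fallback I would run an energy balance. From $J(\bar u)=J(\bar u^+)$ and $\TV(\bar u)=\TV(\bar u^+)+\TV(\bar u^-)$ one gets $\norm[\KR,\lambda]{\bar u^+-u^0}=\norm[\KR,\lambda]{\bar u-u^0}+\TV(\bar u^-)$. Using the cascading formulation of Theorem~\ref{thm:equiv-convex-linfty} and the fact that $\bar u^-$ is supported where $\bar u-u^0<0$ (since $u^0\ge0$), I would try to establish the monotonicity-type estimate $\norm[\KR,\lambda]{\bar u^+-u^0}\le\norm[\KR,\lambda]{\bar u-u^0}$ in this regime, which would force $\TV(\bar u^-)=0$; a nonzero constant $\bar u^-$ is then ruled out by comparing the energy of a negative constant with that of $0$ via $\norm[\KR,\lambda]{\cdot}=\lambda_1\norm[\radon]{\cdot}$ for non-negative measures (Lemma~\ref{lem:KRnorm-estimates}). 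The delicate point is precisely this $\KR$ comparison, where the regime condition is again indispensable.
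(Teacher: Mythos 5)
You follow exactly the paper's route: extract from the proof of Theorem~\ref{thm:maxpincip} that $\bar u^+$ is again a minimizer, then use mass conservation to force $\int_\Omega \bar u^-\d x=0$. Where you go beyond the paper is in noticing that the second ingredient is \emph{not} what Theorem~\ref{thm:mass-preservation} actually states: that theorem asserts existence of \emph{one} mass-conserving minimizer, and two further minimizers $\bar u$, $\bar u^+$ with different means are in themselves perfectly compatible with that. The paper's own two-line proof makes precisely this leap (``\dots would contradict Theorem~\ref{thm:mass-preservation}'') without justification, so the obstacle you single out --- upgrading mass conservation to \emph{all} minimizers --- is the right one, and your reduction of the corollary to that claim is correct.

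The difficulty is that neither of your repairs establishes the claim. In the first, every fact you derive is correct: $\int_\Omega f\d x=0$ follows from constant-invariance of $\TV$, $\max f=\lambda_1$ (or $\min f=-\lambda_1$) follows by perturbing $f$ by constants when $m:=\int_\Omega(\bar u-u^0)\d x\neq 0$, and $\max f-\min f\le\lambda_2\diam(\Omega)\le 2\lambda_1$. But these facts are mutually consistent, so no contradiction can come from them alone: on $\Omega=(0,1)$ with $\lambda_2=2\lambda_1$, the function $f(x)=2\lambda_1(x-\tfrac12)$ has zero mean, Lipschitz constant $\lambda_2$, and $\max f=\lambda_1$. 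What is missing is a link between the dual variables attached to \emph{different} minimizers; the natural tool is the product structure of the saddle-point set (any dual solution pairs with any primal solution), pairing the given $\bar u$ with one fixed dual solution $g$ satisfying $\norm[\infty]{g}\le\lambda_2\diam(\Omega)/2$. When $\lambda_2\diam(\Omega)/2<\lambda_1$ strictly, $g\pm\epsilon$ is then feasible and maximality forces $m=0$; the equality case $\lambda_2\diam(\Omega)/2=\lambda_1$, which the corollary's hypothesis allows, remains genuinely open under this argument. Note also that your observation $\int_\Omega f\d x=0$ cuts deeper than you use it: every dual solution of \eqref{eq:krtv-saddle-1} must have zero mean, whereas the function $\bar f+c$ produced in the proof of Theorem~\ref{thm:mass-preservation} has mean $c\abs{\Omega}$, and for $c\neq 0$ one has $\inf_u H_{\lambda_1,\lambda_2}(u,\bar f+c)=-\infty$ (test with constant $u$); so that function is not a dual solution, and the mass-preservation proof itself needs repair before it can support this corollary. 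Your fallback fares worse: the monotonicity estimate $\norm[\KR,\lambda]{\bar u^+-u^0}\le\norm[\KR,\lambda]{\bar u-u^0}$ is false in general, for exactly the non-local reason that distinguishes $\KR$ from $L^1$: with $u^0=0$ and $\bar u=\chi_A-\chi_B$ for nearby sets $A,B$ of equal measure, $\bar u^-=\chi_B$ is supported where $\bar u-u^0<0$, yet removing it raises the discrepancy from roughly $\lambda_2\,\mathrm{dist}(A,B)\,\abs{A}$ (cheap transport) to $\lambda_1\abs{A}$ (forced destruction). So your proposal reduces the corollary correctly, but the key claim --- every minimizer conserves mass in this regime --- is proved neither in your attempt nor, in fact, in the paper.
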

\begin{proof}
  The proof of Theorem~\ref{thm:maxpincip} reveals that if $\bar u$ is
  a solution, then also $\bar u^+$ is a solution. However, if $\bar u$
  would have a negative part (i.e. $\int_\Omega \bar u^-\d x>0$) then
  $\bar u$ and $\bar u^+$ would have a different mean value which
  would contradict Theorem~\ref{thm:mass-preservation}.
\end{proof}

\section{Numerical solution}
\label{sec:numerical-solution}

In this section we briefly sketch how one may solve the $\KR$-$\TV$
denoising problem~\eqref{eq:krtv-denoising} numerically. Basically, we
rely on methods to solve convex-concave saddle point problems, see,
e.g.~\cite{chambolle2011first,lorenz2014accelerated,goldstein2013adaptive}.

For the primal formulation~\eqref{eq:krtv-denoising} with Lipschitz constraint we
reformulate as follows:
\begin{align}
  &\min_u\max_{f,\phi} \int_\Omega f (u-u^0)\d x + \int_\Omega\grad u\cdot\phi \d x - I_{\norm[\infty]{\cdot}\leq 1}(\abs{\phi})\nonumber\\
  &\qquad\qquad- I_{\norm[\infty]{\cdot}\leq\lambda_1}(f) - I_{\norm[\infty]{\cdot}\leq \lambda_2}(\grad f)
\end{align}
By dualizing the term $I_{\norm[\infty]{\cdot}\leq \lambda_2}(\grad f)$ we obtain another primal variable $q$ and end up with
\begin{equation}
  \begin{split}
    &\min_{u,q}\max_{f,\phi} \int_\Omega f (u-u^0)\d x +
    \int_\Omega\grad u\cdot\phi \d x \\
    &\qquad\qquad- I_{\norm[\infty]{\cdot}\leq 1}(\abs{\phi}) -
    I_{\norm[\infty]{\cdot}\leq\lambda_1}(f) +
    \lambda_2\norm[\radon]{q} - \int_\Omega q\cdot \grad
    f.\label{eq:saddle-point-dual-form}
  \end{split}
\end{equation}
This is of the form
\[
\min_{u,q}\max_{f,\phi} G(u,q) + \scp{K(u,q)}{(f,\phi)} - F(f,\phi)
\]
with
\begin{align*}
  G(u,q) &= \norm[\radon]{q}\\
  F(f,\phi) & = I_{\norm[\infty]{\cdot}\leq 1}(\abs{\phi}) +
  I_{\norm[\infty]{\cdot}\leq\lambda_1}(f)+\int_\Omega f\,u^0\d x\\
  K
  \begin{bmatrix}
    u\\q
  \end{bmatrix}
  & =
  \begin{bmatrix}
    \id & \Div\\
    \grad & 0
  \end{bmatrix}
  \begin{bmatrix}
    u\\q
  \end{bmatrix}
  =
  \begin{bmatrix}
    u + \Div q\\
    \grad u
  \end{bmatrix}
\end{align*}

\begin{remark}
  We may also start from the cascading
  formulation~\eqref{eq:krtv-cascade} which is already almost in
  saddle-point form:
  \begin{align}
    &\min_{u,\nu}\lambda_1\norm[\radon]{u-u_0 - \Div \nu} +
    \lambda_2\norm[\radon]{\abs{\nu}} + \TV(u)\nonumber\\
    = & \min_{u,\nu}\max_\phi\lambda_1\norm[\radon]{u-u_0 - \Div \nu}
    +
    \lambda_2\norm[\radon]{\abs{\nu}} + \int_\Omega\grad u\cdot\phi\d x - I_{\norm[\infty]{\cdot}\leq 1}(\abs{\phi})\nonumber\\
    = & \min_{u,\nu}\max_{f,\phi}\int_\Omega(u-u_0 - \Div \nu)\,f \d x
    + \lambda_2\norm[\radon]{\abs{\nu}} + \int_\Omega\grad
    u\cdot\phi\d x\nonumber\\
    &\qquad\qquad- I_{\norm[\infty]{\cdot}\leq 1}(\abs{\phi}) -
    I_{\norm[\infty]{\cdot}\leq\lambda_1}(f).\nonumber\label{eq:saddlepoint-cascade}
  \end{align}
  However, using $-\int_\Omega \Div\nu \, f\d x = \int_\Omega
  \nu\cdot\grad f\d x$ we arrive back at precisely the same
  formulation as~(\ref{eq:saddle-point-dual-form}) (with $\nu$ instead
  of $q$).
\end{remark}

Note that both $F$ and $G$ admit simple proximity operators (both
implementable in complexity proportional to the number of variables in
$F$ or $G$, respectively). Moreover, the operators $K$ and its adjoint
involve only one application of the gradient and the divergence (and
some pointwise operations) and hence, can also be implemented in
linear complexity. Hence, the application of general first order
primal dual methods leads to methods with very low complexity of the
iterations and usually fast initial progress of the
iterations. Moreover, note that the norm of $K$ can be estimated with
the help of the norm of the (discretized) gradient operator as
$\norm{K}\leq \sqrt{\norm{\grad}^2 + 2}$. In our experiments we used
the inertial forward-backward primal-dual method
from~\cite{lorenz2014accelerated} with a constant inertial parameter
$\alpha$.

For our one-dimensional examples in Section~\ref{sec:one-dim-ex} the
total number of variables is small enough so that general purpose
solvers for convex optimization can be applied. Here we used
CVX~\cite{gb08,cvx} with the interior point solver from
MOSEK.\footnote{\url{http://mosek.com}}

\section{Experiments}
\label{sec:experiments}

In this section we present examples of minimizers of the $\KR$-$\TV$
problem. In each subsection we do not have the aim to show that
$\KR$-$\TV$ outperforms any existing method but to point out
additional features of this new approach. Hence, we do in general not
compare the $\KR$-$\TV$ functional against the most successful method
for the respective task, but to the closest relative among the
successful methods, i.e. to the $L^1$-$\TV$ method.

\subsection{One-dimensional examples}
\label{sec:one-dim-ex}

Figure~\ref{fig:1d-KT-TV} shows the influence of the parameters
$\lambda_1$ and $\lambda_2$ in three simple but instructive examples:
a plateau, a ramp and a hat.

For the $L^1$-$\TV$ case the plateau either stays exact (for $\lambda_1$
large enough) or totally disappears (for $\lambda_1$ small enough).
If the plateau would have been wide enough, the it would not disappear
but the minimizer would be constant 1 since the minimizer always
approaches the constant median value for $\lambda_1\to 0$. In the
$\KR$-$\TV$ case, however, the plateau gets wider and flatter while the
total mass is preserved. In the limit $\lambda_2\to 0$ the minimizer
converges to a constant but still has the same mass than $u^0$ since
for $\lambda_2\to 0$ one approaches the constant mean value.

For the ramp, $L^1$-$\TV$ shows the known behavior that the ramp is
getting flatter and flatter for decreasing $\lambda_1$. In the limit
$\lambda_1\to 0$ one obtains the constant median. For $\KR$-$\TV$,
somewhat unexpectedly, the ramp not only gets flatter (it approaches the
constant mean value, which equals the median here) but also forms new
jumps. For some parameter value, the minimizer is even a pure jump.

The observation for the hat is somehow similar to the ramp: $L^1$-$\TV$
just cuts off the hat-tip while $\KR$-$\TV$ creates additional jumps.

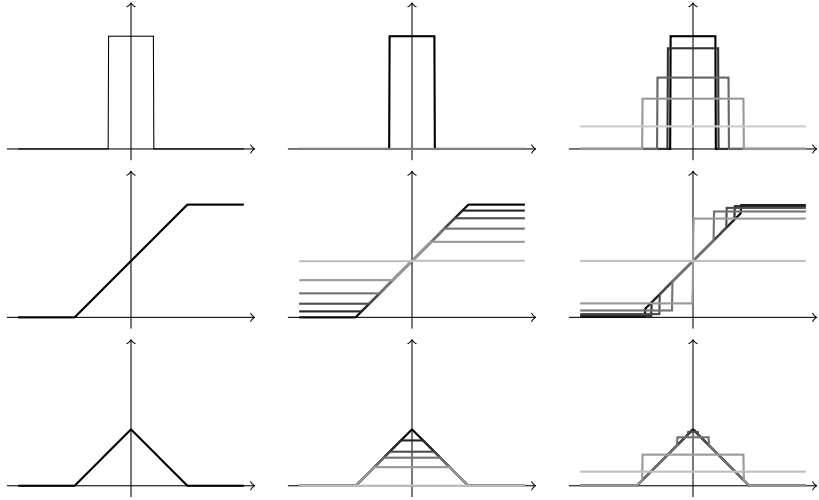
\begin{figure}[htb]
  \centering
  \begin{tabular}{ccc}
    \begin{tikzpicture}[scale=1.5]
      \draw[->] (-1.1,0) -- (1.1,0);
      \draw[->] (0,-0.1) -- (0,1.3);
      \draw plot file {data/varying_lambda1_lambda2_1d/plateau.dat};
      
    \end{tikzpicture}
    & 
    \begin{tikzpicture}[scale=1.5]
      \draw[->] (-1.1,0) -- (1.1,0);
      \draw[->] (0,-0.1) -- (0,1.3);
      \draw[thick,black!100] plot file {data/varying_lambda1_lambda2_1d/plateau_lambda1_10_lambda2_1000.dat};
      \draw[thick,black!50] plot file {data/varying_lambda1_lambda2_1d/plateau_lambda1_2_lambda2_1000.dat};
    \end{tikzpicture}
    &
    \begin{tikzpicture}[scale=1.5]
      \draw[->] (-1.1,0) -- (1.1,0);
      \draw[->] (0,-0.1) -- (0,1.3);
      \draw[thick,black!100] plot file {data/varying_lambda1_lambda2_1d/plateau_lambda1_100_lambda2_80.dat};
      \draw[thick,black!80] plot file {data/varying_lambda1_lambda2_1d/plateau_lambda1_100_lambda2_40.dat};
      \draw[thick,black!60] plot file {data/varying_lambda1_lambda2_1d/plateau_lambda1_100_lambda2_20.dat};
      \draw[thick,black!40] plot file {data/varying_lambda1_lambda2_1d/plateau_lambda1_100_lambda2_10.dat};
      \draw[thick,black!20] plot file {data/varying_lambda1_lambda2_1d/plateau_lambda1_100_lambda2_5.dat};
    \end{tikzpicture}
    \\
    \begin{tikzpicture}[scale=1.5]
      \draw[->] (-1.1,0) -- (1.1,0);
      \draw[->] (0,-0.1) -- (0,1.3);
      \draw[thick] plot file {data/varying_lambda1_lambda2_1d/ramp.dat};
    \end{tikzpicture}
    & 
    \begin{tikzpicture}[scale=1.5]
      \draw[->] (-1.1,0) -- (1.1,0);
      \draw[->] (0,-0.1) -- (0,1.3);
      \draw[thick,black!100] plot file {data/varying_lambda1_lambda2_1d/ramp_lambda1_2_lambda2_1000.dat};
      \draw[thick,black!85] plot file {data/varying_lambda1_lambda2_1d/ramp_lambda1_1.8_lambda2_1000.dat};
      \draw[thick,black!70] plot file {data/varying_lambda1_lambda2_1d/ramp_lambda1_1.6_lambda2_1000.dat};
      \draw[thick,black!55] plot file {data/varying_lambda1_lambda2_1d/ramp_lambda1_1.4_lambda2_1000.dat};
      \draw[thick,black!40] plot file {data/varying_lambda1_lambda2_1d/ramp_lambda1_1.2_lambda2_1000.dat};
      \draw[thick,black!25] plot file {data/varying_lambda1_lambda2_1d/ramp_lambda1_1_lambda2_1000.dat};
    \end{tikzpicture}
    &
    \begin{tikzpicture}[scale=1.5]
      \draw[->] (-1.1,0) -- (1.1,0);
      \draw[->] (0,-0.1) -- (0,1.3);
      \draw[thick,black!100] plot file {data/varying_lambda1_lambda2_1d/ramp_lambda1_100_lambda2_6.dat};
      \draw[thick,black!85] plot file {data/varying_lambda1_lambda2_1d/ramp_lambda1_100_lambda2_5.dat};
      \draw[thick,black!70] plot file {data/varying_lambda1_lambda2_1d/ramp_lambda1_100_lambda2_4.dat};
      \draw[thick,black!55] plot file {data/varying_lambda1_lambda2_1d/ramp_lambda1_100_lambda2_3.dat};
      \draw[thick,black!40] plot file {data/varying_lambda1_lambda2_1d/ramp_lambda1_100_lambda2_2.dat};
      \draw[thick,black!25] plot file {data/varying_lambda1_lambda2_1d/ramp_lambda1_100_lambda2_1.dat};
    \end{tikzpicture}
    \\
    \begin{tikzpicture}[scale=1.5]
      \draw[->] (-1.1,0) -- (1.1,0);
      \draw[->] (0,-0.1) -- (0,1.3);
      \draw[thick] plot file {data/varying_lambda1_lambda2_1d/hat.dat};
      
    \end{tikzpicture}
    & 
    \begin{tikzpicture}[scale=1.5]
      \draw[->] (-1.1,0) -- (1.1,0);
      \draw[->] (0,-0.1) -- (0,1.3);
      \draw[thick,black!100] plot file {data/varying_lambda1_lambda2_1d/hat_lambda1_300_lambda2_100000.dat};
      \draw[thick,black!85] plot file {data/varying_lambda1_lambda2_1d/hat_lambda1_10_lambda2_100000.dat};
      \draw[thick,black!70] plot file {data/varying_lambda1_lambda2_1d/hat_lambda1_5_lambda2_100000.dat};
      \draw[thick,black!55] plot file {data/varying_lambda1_lambda2_1d/hat_lambda1_4_lambda2_100000.dat};
      \draw[thick,black!40] plot file {data/varying_lambda1_lambda2_1d/hat_lambda1_3_lambda2_100000.dat};
      \draw[thick,black!25] plot file {data/varying_lambda1_lambda2_1d/hat_lambda1_2_lambda2_100000.dat};
    \end{tikzpicture}
    &
    \begin{tikzpicture}[scale=1.5]
      \draw[->] (-1.1,0) -- (1.1,0);
      \draw[->] (0,-0.1) -- (0,1.3);
      \draw[thick,black!100] plot file {data/varying_lambda1_lambda2_1d/hat_lambda1_300_lambda2_100000.dat};
      \draw[thick,black!85] plot file {data/varying_lambda1_lambda2_1d/hat_lambda1_300_lambda2_10000.dat};
      \draw[thick,black!70] plot file {data/varying_lambda1_lambda2_1d/hat_lambda1_300_lambda2_1000.dat};
      \draw[thick,black!55] plot file {data/varying_lambda1_lambda2_1d/hat_lambda1_300_lambda2_100.dat};
      \draw[thick,black!40] plot file {data/varying_lambda1_lambda2_1d/hat_lambda1_300_lambda2_10.dat};
      \draw[thick,black!25] plot file {data/varying_lambda1_lambda2_1d/hat_lambda1_300_lambda2_1.dat};
    \end{tikzpicture}
  \end{tabular}
  \caption{One-dimensional illustrations for KR-TV denoising with
    varying parameters. Left: Original functions $u^0$. Middle:
    Corresponding $L^1$-$\TV$ minimizers with $\lambda_1$ decreasing
    (lighter gray corresponds to smaller $\lambda_1$); $\lambda_2$ is
    so large, that the respective constraint is inactive
    throughout. Right: Corresponding $\KR$-$\TV$ minimizers with
    decreasing $\lambda_2$ (lighter gray corresponds to smaller
    $\lambda_2$); $\lambda_1$ is so large, that the respective
    constraint is inactive throughout.}
  \label{fig:1d-KT-TV}
\end{figure}

\subsection{Two dimensional denoising with $\KR$-$\TV$}
\label{sec:two-dim-KR-TV-denoise}

We illustrate the denoising capabilities of $\KR$-$\TV$ in comparison
with $L^1$-$\TV$ in Figures~\ref{fig:2d-KR-TV_denoise_parameters}
and~\ref{fig:2d-KR-TV-denoise}.
Figure~\ref{fig:2d-KR-TV_denoise_parameters} shows effects similar to
those shown in~Figure~\ref{fig:1d-KT-TV} in one dimension. While both
$L^1$-$\TV$ and $\KR$-$\TV$ denoise the image well, $L^1$-$\TV$ tends to
remove small structures completely while $\KR$-$\TV$ mashes small
structures together before they are merged with the background.

\begin{figure}[htb]
  \centering
  \begin{tabular}{ccc}
    &
    $L^1$-$\TV$
    & 
    $\KR$-$\TV$
    \\
    \includegraphics[width=0.3\textwidth]{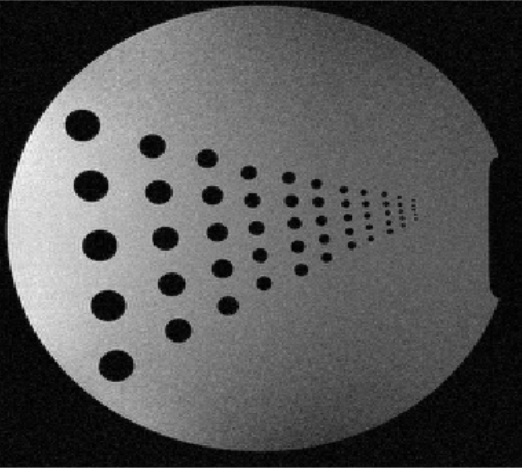} &
    \includegraphics[width=0.3\textwidth]{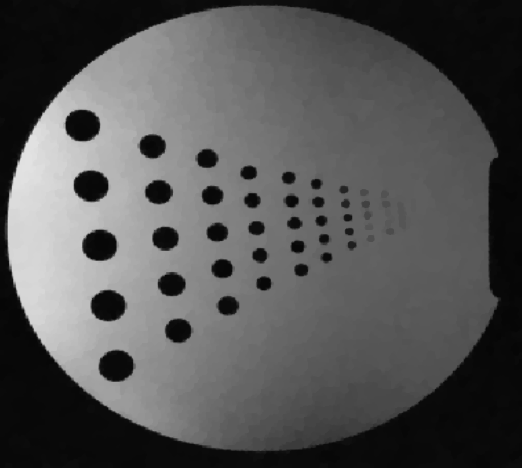} &
    \includegraphics[width=0.3\textwidth]{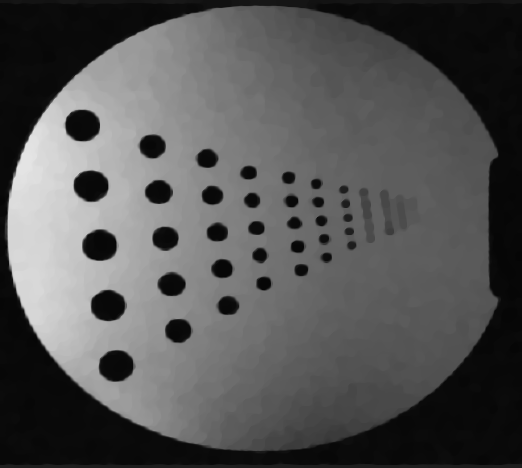}\\
    &
    $\lambda_1= 0.6$
    &
    $\lambda_2 = 0.004$
    \\
    &
    \includegraphics[width=0.3\textwidth]{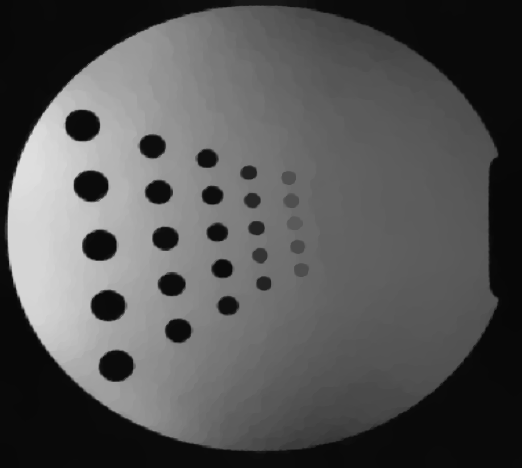} &
    \includegraphics[width=0.3\textwidth]{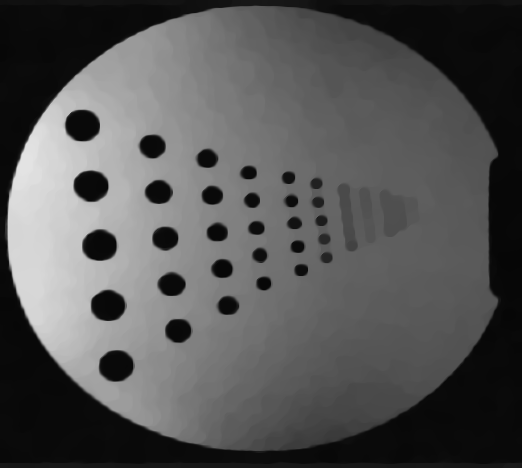} \\
    &
    $\lambda_1= 0.3$
    &
    $\lambda_2 = 0.002$
  \end{tabular}   
  \caption{Denoising with $\KR$-$\TV$ and $L^1$-$\TV$. In the right
    images $\lambda_1$ is so large that the respective constraint is
    inactive.}
  \label{fig:2d-KR-TV_denoise_parameters}
\end{figure}

In Figure~\ref{fig:2d-KR-TV-denoise} we took a piecewise affine image,
contaminated by noise and denoised it by $L^1$-$\TV$ and $\KR$-$\TV$. The
parameters $\lambda_1$, respectively $\lambda_2$ have been tuned by
hand to give a minimal $L^1$-error to the ground truth, i.e. to the noise-free $u^\dag$. Even though this choice seem to be perfectly suited
for $L^1$-$\TV$ it turns out that $\KR$-$\TV$ achieves a smaller
error. Also note that staircasing is slightly reduced but also edges
are a little more blurred for $\KR$-$\TV$ than for $L^1$-$\TV$.

\begin{figure}[htb]
  \centering

  \begin{tabular}{ccc}
    &
    \includegraphics[width=0.3\textwidth]{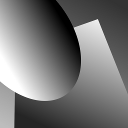}
    &
    \\
    &
    $u^\dag$
    &
    \\
    \includegraphics[width=0.3\textwidth]{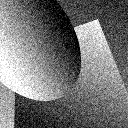}
    &
    \includegraphics[width=0.3\textwidth]{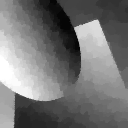}
    &
    \includegraphics[width=0.3\textwidth]{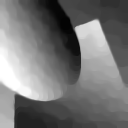}\\
    noisy, $u^0$
    &
    $L^1$-TV
    &
    KR-TV \\
    & 
    $\norm[L^1]{u-u^\dag} = 295.7$
    & 
    $\norm[L^1]{u-u^\dag} = 253.7$
  \end{tabular}
  \caption{Denoising with $\KR$-$\TV$ and $L^1$-$\TV$. Left: Noisy image,
    middle: KR-TV denoised by using the value $\lambda_2$ only
    ($\lambda_1$ so large, that the bound is inactive), right,
    $L^1$-TV denoising (i.e. only $\lambda_1$ is used). The respective
    values $\lambda_1$ and $\lambda_2$ have been optimized to result
    is the smallest $L^1$ error to the original noise-free image.}
  \label{fig:2d-KR-TV-denoise}
\end{figure}

\subsection{Cartoon-Texture decomposition}
\label{sec:cartoon-texture}

We compare the $\KR$-$\TV$ model for cartoon texture decomposition with $L^1$-$\TV$ and also with Meyer's $G$-$\TV$ (cf.~Section~\ref{sec:re-g-norm}).
In Figure~\ref{fig:2d-KR-TV-cartoon-texture} we show decompositions of
Barbara into its cartoon and texture part.  
The parameters have been chosen as follows: We started with the value $\lambda_1$ for the $L^1$-$\TV$ decomposition (i.e. $\lambda_2=\infty$) and chose it such that
 most texture is in the texture component but
also some structure is already visible. Then, for the $G$-$\TV$ the parameter was adjusted such that the cartoon part has the same total variation as the cartoon part from the $L^1$-$\TV$ decomposition. For the $\KR$-$\TV$ decomposition, the value $\lambda_1$ was set to $\infty$
      while $\lambda_2$ was again chosen such that the total variation of the cartoon part equals the total variation of the other cartoon parts.
The rationale behind this choice is, that the total variation is used as a prior for the cartoon part in all three models.
We remark that the choosing the parameters such that the $L^1$-discrepancy of the texture part is equal for all three decompositions leads to slightly different, but visually comparable results.

 Note that, for these parameters 
the $L^1$-$\TV$ decomposition already has
some structure in the texture part (parts of the face and of the
bookshelf) and the $G$-$\TV$ decomposition has structure and texture severely mixed,  while for $\KR$-$\TV$ the texture component still mainly
contains texture. Also note that $\KR$-$\TV$ manages to keep the
smooth structure of the clothes in the cartoon part (see e.g. the
scarf and the trousers) while $L^1$-$\TV$ gives a more ``constant''
cartoon image.

\begin{figure}[htb]
  \centering
  \begin{tabular}{cccc}
    Original
    &
    $L^1$-$\TV$
    &
    $G$-$\TV$
    &
    $\KR$-$\TV$\\
    \begin{minipage}{0.23\linewidth}
      \includegraphics[width=\textwidth]{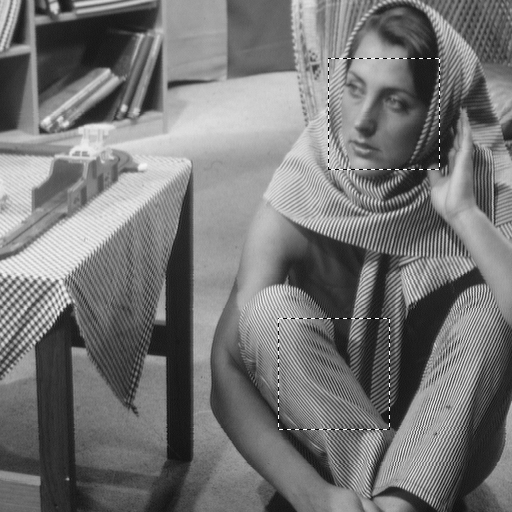}\\
      \includegraphics[width=0.48\textwidth]{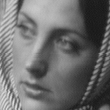}\hfill
      \includegraphics[width=0.48\textwidth]{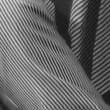}
    \end{minipage}
    & 
    \begin{minipage}{0.23\linewidth}
      \includegraphics[width=\textwidth]{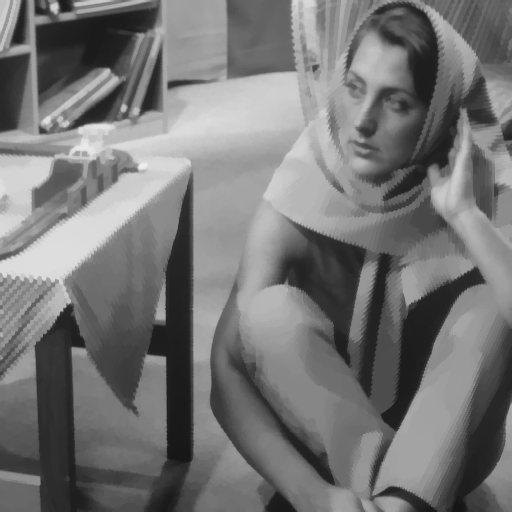}\\
      \includegraphics[width=0.48\textwidth]{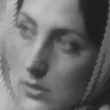}\hfill
      \includegraphics[width=0.48\textwidth]{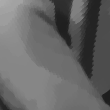}
    \end{minipage}
    &
    \begin{minipage}{0.23\linewidth}
      \includegraphics[width=\textwidth]{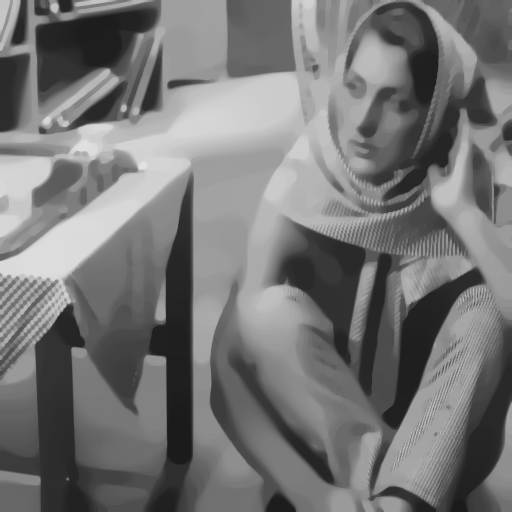}\\
      \includegraphics[width=0.48\textwidth]{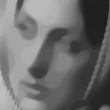}\hfill
      \includegraphics[width=0.48\textwidth]{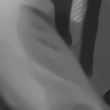}
    \end{minipage}
    & 
    \begin{minipage}{0.23\linewidth}
      \includegraphics[width=\textwidth]{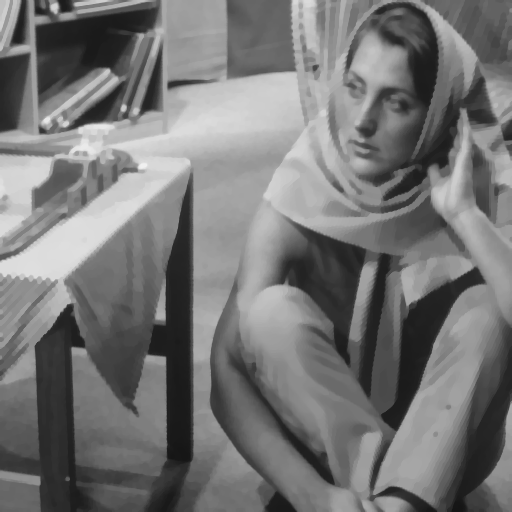}\\
      \includegraphics[width=0.48\textwidth]{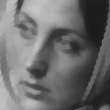}\hfill
      \includegraphics[width=0.48\textwidth]{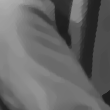}
    \end{minipage}\vspace*{\medskipamount}\\
    & 
    \begin{minipage}{0.23\linewidth}
      \includegraphics[width=\textwidth]{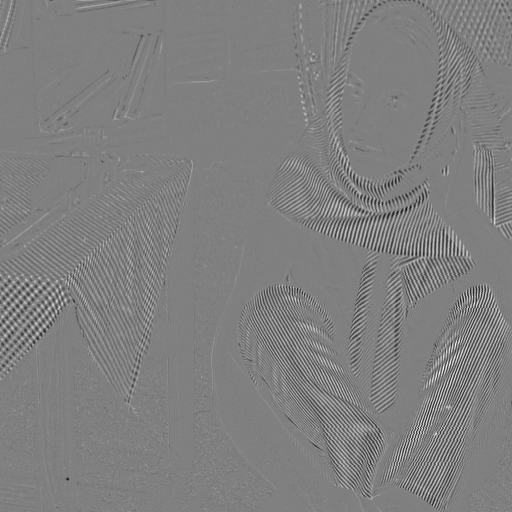}\\
      \includegraphics[width=0.48\textwidth]{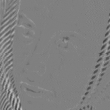}\hfill
      \includegraphics[width=0.48\textwidth]{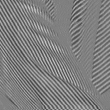}
    \end{minipage}
    &
    \begin{minipage}{0.23\linewidth}
      \includegraphics[width=\textwidth]{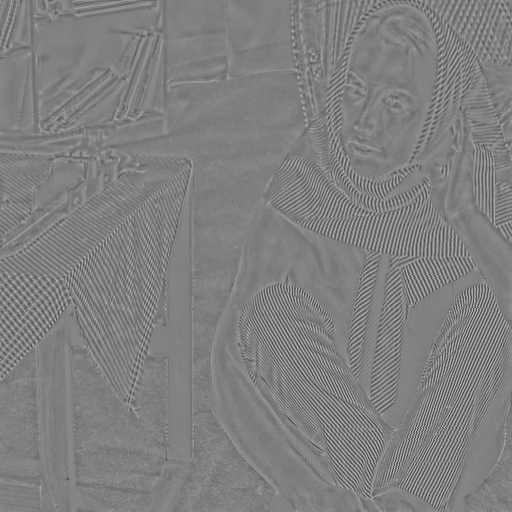}\\
      \includegraphics[width=0.48\textwidth]{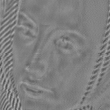}\hfill
      \includegraphics[width=0.48\textwidth]{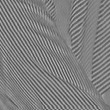}
    \end{minipage}
    & 
    \begin{minipage}{0.23\linewidth}
      \includegraphics[width=\textwidth]{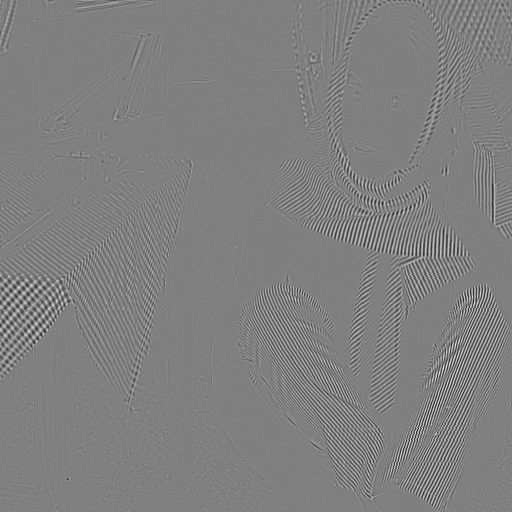}\\
      \includegraphics[width=0.48\textwidth]{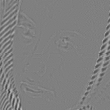}\hfill
      \includegraphics[width=0.48\textwidth]{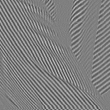}
    \end{minipage}
    \end{tabular}
    \caption{Cartoon-texture decomposition with $L^1$-$\TV$, $G$-$\TV$, and $\KR$-$\TV$. Top row: original and cartoon parts, bottom row: texture parts.}
  \label{fig:2d-KR-TV-cartoon-texture}
\end{figure}

\section{Conclusion}
\label{sec:conclusion}
In this paper we propose a new discrepancy term in a total variation
regularisation approach for images that is motivated by optimal
transport. The proposed discrepancy term is the Kantorovich-Rubinstein
transport norm. We show relations of this norm to other standard
discrepancy terms in the imaging literature and derive qualitative
properties of minimizers of a total variation regularization model
with a KR discrepancy. Indeed, we find that the KR discrepancy can be
seen as a generalization of the dual Lipschitz norm and the $L^1$ norm,
both of which can be derived from the Kantorovich-Rubinstein norm by letting one of the
parameters go to infinity, respectively. Moreover, we show that this
specialization is in fact crucial for obtaining a model in which the
solution conserves mass and that the model has a solution which
preserves positivity.

The paper is furnished with a discussion of experiments where we use
the $\KR$-$\TV$ regularisation approach in the context of image
denoising and image decomposition. Our numerical discussion suggests
that the use of the $\KR$ norm can reduce the $\TV$ staircasing effect
and performs better when decomposing an image into a cartoon-like and
oscillatory component. Due to the mass conversation property we also
expect that this approach is interesting in medical imaging, where
images are usually indeed density functions of physical quantities, as
well as in the context of density estimation where total variation
approaches have been used before in the context of earthquakes and
fires, see~\cite{mohler2011fast} for instance.  The applicability of
the $\KR$ discrepancy in other imaging problems such as optical flow,
image sequence interpolation or stereo vision has to be investigated
in future research.

While some analytical properties of the $\KR$-$\TV$ method have been
established (e.g. a weak maximum principle and a mass preservation
property), a deeper understanding of the geometrical properties, as
has been carried out for and $L^1$-$\TV$ and $L^2$-$\TV$, as well
as for $\TGV$ on one-dimensional domains (see,
e.g.,~\cite{strong2003edgepreserving,caselles2007discontinuity,duval2009geometricl1tv,l1tgv,papafitsoros2013onedimtgv,poeschl2013exacttgv}),
would indeed be interesting. However, due to the non-locality of the
$\KR$ discrepancy, the analysis may be more complicated.

\section*{Acknowledgement}

This project has been financially supported by the King Abdullah
University of Science and Technology (KAUST) Award No.~KUK-I1-007-43,
and the EPSRC first grant Nr.~EP/J009539/1 ``Sparse \& Higher-order 
Image Restoration''. T.~Valkonen has further been supported by a 
Senescyt (Ecuadorian ministry of Education, Science, and Technology)
Prometeo Fellowship. J.~Lellmann has been supported by the Leverhulme
Early Career Fellowship ECF-2013-436.

\bibliographystyle{plain}
\bibliography{references}
\end{document}